\documentclass[11pt]{article}
\usepackage[main,final]{neurips_2026}
\makeatletter
\@ifundefined{@notice}{}{\gdef\@notice{}}
\makeatother

\usepackage{graphicx}
\usepackage{url}
\usepackage{multirow}
\usepackage{booktabs}
\usepackage{colortbl}
\usepackage{xcolor}
\usepackage{amsmath, amssymb, amsthm}
\usepackage{array}
\usepackage[colorlinks=true,linkcolor=blue,citecolor=blue,urlcolor=blue]{hyperref}
\usepackage{microtype}
\usepackage{placeins}  

\newtheorem{lemma}{Lemma}
\theoremstyle{definition}

\newtheorem{problem}{Problem}
\theoremstyle{plain}
\newtheorem{theorem}{Theorem}
\newtheorem{proposition}{Proposition}

\theoremstyle{remark}
\newtheorem{remark}{Remark}
\newtheorem*{theorem*}{Theorem}
\newtheorem*{lemma*}{Lemma}

\newcommand{\best}[1]{\textbf{#1}}
\definecolor{bestcolor}{RGB}{220,237,200}
\definecolor{ourscolor}{RGB}{232,245,255}
\definecolor{headercolor}{RGB}{240,240,240}

\begin{document}

\title{A Separation Principle for Adversarial Attacks and Defense of Graph Neural Networks}

\author{
  Farid Mafi \\
  University of Waterloo \\
  Waterloo, ON, Canada \\
  \texttt{farid.mafishourestani@uwaterloo.ca}
  \And
  Saber Jafarpour \\
  University of Colorado Boulder \\
  Boulder, CO, USA \\
  \texttt{saber.jafarpour@colorado.edu}
  \AND
  Mohammad Pirani \\
  University of Ottawa \\
  Ottawa, ON, Canada \\
  \texttt{mpirani@uottawa.ca}
}

\maketitle

\begin{abstract}
Adversarial attacks on graph neural networks (GNNs) have been studied as a model-centric optimization problem driven by gradients, learned policies, or meta-learning, with the graph treated as a passive substrate. This view overlooks a defining feature of GNNs: their predictions are produced by structured message passing along the graph, so their adversarial vulnerabilities are encoded as much in the geometry of the graph as in the learned weights. We develop a structure-driven framework for feature attacks in which the attack is derived from the spectral properties of the normalized adjacency operator. Its central result is a \emph{separation principle}: under a natural relaxation of the attack objective, the optimal feature perturbation is rank one and factorizes into a graph-dependent component (degree-proportional spatial allocation, depending only on the graph) and a weight-dependent component (top singular vector of the composed weights, depending only on the weights). Node degree thus emerges as the fundamental graph quantity governing adversarial vulnerability to feature perturbations. The resulting attack, which we call \textbf{SepAttack}, requires no gradient computation, carries analytical guarantees, achieves competitive or superior performance against established baselines such as Nettack and IG-Attack with up to ${\sim}70\times$ speedup in the targeted setting, and transfers near-perfectly across architectures on continuous-feature graphs because its graph-dependent component is architecture-invariant by construction. We further derive a separation-principle adversarial training procedure (\textbf{SepAT}) directly from the closed-form worst-case attack and report robustness gains on binary-feature benchmarks.
\end{abstract}

\section{Introduction}
\label{sec:introduction}

Adversarial attacks on graph neural networks (GNNs) have been studied as a model-centric optimization problem: gradient procedures~\cite{zugner2018adversarial,xu2019topology,wu2019adversarial}, reinforcement-learning policies~\cite{dai2018adversarial,ju2023let}, meta-learning objectives~\cite{zugner2019adversarial}, and learned heuristics~\cite{zou2021tdgia,chen2022understanding} all attack the model, with the graph treated as fixed background. But a GNN's prediction is shaped by message passing along the graph as much as by the learned weights, so adversarial vulnerability cannot be a property of the model alone. How do the graph and the model combine at the adversarial optimum? Existing work analyzes one side at a time (spectral structure on the defense side, gradients on the attack side) or treats the interaction empirically. No closed-form principle has been articulated for how the two combine. This paper identifies one.

We call it the \textbf{separation principle}. Under the canonical feature-attack objective on an $L$-layer linearized GCN, the graph and the model contribute to the optimal perturbation \emph{independently} and combine \emph{multiplicatively}. The graph contributes a spatial allocation that depends only on its degree distribution, so higher-degree nodes receive proportionally larger perturbations regardless of model. The model contributes a single feature direction that depends only on its weights: the top singular vector of the composed weight matrix, regardless of graph. Neither contribution alone produces an effective attack; their product is the optimum (Theorem~\ref{thm:separation}). The principle has three constructive implications: the optimal attack admits a closed-form algorithm from a single eigendecomposition; attacks transfer across architectures to the extent that $\mathbf{w}_1$ is shared; and adversarial training reduces to enforcing output stability along a single analytically known direction.

We confirm this on standard citation networks. SepAttack dominates Nettack~\cite{zugner2018adversarial} and IG-Attack~\cite{wu2019adversarial} across all 20 targeted-evaluation cells on Cora and CiteSeer at up to ${\sim}70\times$ the speed of Nettack with zero gradient computation; on continuous-feature graphs it transfers at $97$--$104\%$ efficiency to GAT, GraphSAGE, APPNP, and JKNet without adaptation; and the principle-derived adversarial training procedure SepAT (Section~\ref{sec:robust_training}) recovers a substantial fraction of the accuracy lost to SepAttack on Cora and CiteSeer (Section~\ref{sec:sepat_results}). An ablation (Appendix~\ref{app:rank_one_validation}) tests the principle's distinctive claim: randomizing either factor reduces attack damage by $4$--$5\times$ on continuous-feature graphs and to the random baseline on binary-feature graphs, confirming both factors are jointly necessary.

\paragraph{Contributions.} (1)~The separation principle (Theorem~\ref{thm:separation}); (2)~the closed-form gradient-free attack SepAttack and its cross-architecture transfer (Sections~\ref{sec:feature_attack}--\ref{sec:transfer}); and (3)~principle-derived bilateral adversarial training SepAT (Section~\ref{sec:robust_training}).

\section{Related Work}
\label{sec:related_work}

Existing attacks on GNNs are predominantly model-centric. Gradient-based methods such as Nettack~\cite{zugner2018adversarial}, FGA~\cite{chen2018fast}, PGD~\cite{xu2019topology}, IG-Attack~\cite{wu2019adversarial}, and GraD~\cite{liu2023towards} score perturbations through model-specific gradients; meta-learning attacks (Metattack~\cite{zugner2019adversarial}) optimize perturbations as hyperparameters; reinforcement-learning approaches (RL-S2V~\cite{dai2018adversarial}, G2A2C~\cite{ju2023let}) and node-injection methods (NIPA~\cite{sun2020adversarial}, TDGIA~\cite{zou2021tdgia}, HAO~\cite{chen2022understanding}) learn attack policies empirically. Topological heuristics (GF-Attack~\cite{chang2020restricted}, VoteAttack~\cite{wen2024voteattack}, GOttack~\cite{alom2025gottack}) employ structural information only as auxiliary scoring signal. A spectral view of robustness is best developed on the \emph{defense} side: GCN-SVD~\cite{entezari2020all}, ProGNN~\cite{jin2020graph}, and the matrix-perturbation analysis of \citet{chang2021not} all exploit the spectral concentration of clean graphs. On the \emph{attack} side, the spectral line is sparser and uniformly iterative. \citet{bojchevski2019adversarial} target DeepWalk embeddings via eigenvalue perturbation; SPAC~\cite{lin2022graph} maximizes a Laplacian-spectrum distance via projected gradient descent; SAT~\cite{li2022spectral} performs adversarial training through gradient-driven eigenvector perturbations. The closest prior work (SPAC and SAT) constructs perturbations through iterative optimization and does not identify a closed-form optimal attack or a spatial--feature factorization. Our framework departs from this lineage in three respects: it targets feature rather than topology perturbations, it derives the optimal perturbation in closed form rather than iteratively, and it identifies an explicit rank-one factorization into graph-dependent and weight-dependent components that no prior spectral-attack work exposes. On adversarial training, \citet{mujkanovic2022defenses} and \citet{gosch2023adversarial} highlight the fragility of many GNN defenses under adaptive attacks; SepAT derives its training objective directly from the closed-form worst-case attack of Theorem~\ref{thm:separation}. Appendix~\ref{app:lit_comparison} provides an extended categorization with a comparison table.

\newpage

\section{Problem Formulation}
\label{sec:problem_formulation}

\subsection{Preliminaries}

Let $\mathcal{G} = (\mathcal{V}, \mathcal{E}, \mathbf{A}, \mathbf{X})$ be a graph with node features (an attributed graph), where $\mathcal{V} = \{v_1, \ldots, v_N\}$ is the set of nodes, $\mathcal{E} \subseteq \mathcal{V} \times \mathcal{V}$ is the set of edges, and $w : \mathcal{E} \to \mathbb{R}_{\ge 0}$ is a weight function. The (possibly weighted) adjacency matrix $\mathbf{A} \in \mathbb{R}^{N \times N}$ is defined as
\begin{equation*}
\mathbf{A}_{ij} =
\begin{cases}
w(v_i, v_j), & \text{if } (v_i, v_j) \in \mathcal{E}, \\
0, & \text{otherwise}.
\end{cases}
\end{equation*}
The framework allows arbitrary non-negative edge weights; the citation networks used throughout our experiments (Cora, CiteSeer, PubMed) have binary entries, but no step in the derivation exploits binarity of $\mathbf{A}$.
We define the node feature matrix $\mathbf{X} \in \mathbb{R}^{N \times D}$ as a stack of node-feature row vectors:
\begin{equation*}
\mathbf{X} \;=\; \begin{bmatrix}\,\mathbf{X}_{1,:}\, \\[-1pt] \,\vdots\, \\[-1pt] \,\mathbf{X}_{N,:}\,\end{bmatrix},
\qquad \mathbf{X}_{i,:} \in \mathbb{R}^D,
\end{equation*}
where the row $\mathbf{X}_{i,:}$ is the $D$-dimensional feature vector associated with node $v_i$.
Let $\tilde{\mathbf{A}} = \mathbf{A} + \mathbf{I}_N$ denote the adjacency with added self-loops and $\tilde{\mathbf{D}}$ the corresponding degree matrix; the (symmetric) normalized adjacency, denoted as $\hat{\mathbf{A}}$, is defined as $\hat{\mathbf{A}} = \tilde{\mathbf{D}}^{-1/2}\,\tilde{\mathbf{A}}\,\tilde{\mathbf{D}}^{-1/2}$.

The Graph Convolutional Network (GCN), introduced in~\cite{kipf2016semi}, is one of the most popular and widely used GNN architectures. The layer-wise propagation rule of an $L$-layer GCN is, for $\ell \in \{0, 1, \ldots, L-1\}$,
\begin{equation}
\label{eq:gcn_layerwise}
    \mathbf{H}^{(\ell+1)} \;=\; \sigma\!\left(\hat{\mathbf{A}}\,\mathbf{H}^{(\ell)}\,\mathbf{W}^{(\ell)}\right),
    \qquad \mathbf{H}^{(0)} = \mathbf{X},
\end{equation}
where $\mathbf{H}^{(\ell)} \in \mathbb{R}^{N \times F_\ell}$ collects the node representations at layer $\ell$, $\mathbf{W}^{(\ell)} \in \mathbb{R}^{F_\ell \times F_{\ell+1}}$ is the trainable weight matrix at layer $\ell$, and $\sigma(\cdot)$ is a component-wise non-linear activation. We adopt the standard GCN convention of omitting an additive bias term in~\eqref{eq:gcn_layerwise}; including a bias does not alter any of the spectral derivations that follow, since the bias cancels in the linearized residual of Section~\ref{sec:feature_attack}. The final node-classification output of an $L$-layer GCN is
\begin{equation}
\label{eq:gcn_Llayer}
    \mathbf{Z} \;=\; f_\theta(\mathbf{A}, \mathbf{X}) \;=\; \mathrm{softmax}\!\left(\hat{\mathbf{A}}\,\sigma\!\left(\hat{\mathbf{A}}\,\sigma\!\left(\cdots \hat{\mathbf{A}}\,\mathbf{X}\,\mathbf{W}^{(0)}\right)\mathbf{W}^{(1)}\right)\cdots\mathbf{W}^{(L-1)}\right).
\end{equation}

For the sake of generality we present all of the analysis below for an arbitrary number of layers $L$. In practice, two-layer GCNs ($L = 2$) are the most commonly used variant, as deeper GCNs tend to suffer from over-smoothing~\cite{li2018deeper, oono2019graph} (see Appendix~\ref{app:two_layer} for a detailed justification); this is also the architecture we use for the experiments of Section~\ref{sec:results}.

\subsection{Threat Model and Residual Analysis}
\label{sec:threat_model}

In this work, we focus on \emph{feature} perturbations and develop a closed-form spectral characterization of the worst-case feature attack in Section~\ref{sec:feature_attack}.

We model a feature attack as an additive perturbation $\mathbf{X} \mapsto \mathbf{X} + \Delta\mathbf{X}$ with $\Delta\mathbf{X} \in \mathbb{R}^{N \times D}$. Its effect on the model is captured by the \emph{feature residual} at the output layer, defined as the difference between the $L$-th layer representation of the perturbed input and that of the clean input:
\begin{equation}
\label{eq:residual_def}
    \mathbf{R}_f^{(L)}(\Delta\mathbf{X})
    \;\triangleq\; \mathbf{H}^{(L)}(\mathbf{X} + \Delta\mathbf{X}) \;-\; \mathbf{H}^{(L)}(\mathbf{X}).
\end{equation}
Intuitively, $\mathbf{R}_f^{(L)}(\Delta\mathbf{X})$ measures how much the attack shifts the pre-classification embedding,\footnote{The residual is taken \emph{before} the softmax of~\eqref{eq:gcn_Llayer}, i.e., on the pre-classification logits $\mathbf{H}^{(L)}$ rather than on the post-softmax probabilities $\mathbf{Z} = \mathrm{softmax}(\mathbf{H}^{(L)})$. Since the softmax is monotone in the logit gap, working with logit residuals is equivalent for our purposes (it preserves the argmax decision and the ranking of perturbations) and matches the canonical choice of prior work~\cite{zugner2018adversarial,wu2019adversarial}.} and therefore directly drives the change in the downstream classification loss. The worst-case feature attack on a binary-feature graph maximizes this residual under a feature-flip budget, which gives:

\begin{problem}[Worst-case feature attack on a nonlinear GCN]
\label{prob:nonlinear}
Given an $L$-layer GCN with nonlinear forward map $\mathbf{H}^{(L)}(\cdot)$ defined by~\eqref{eq:gcn_Llayer}, a clean binary feature matrix $\mathbf{X}\in\{0,1\}^{N\times D}$, and a total flip budget $\Gamma_{\mathrm{total}}\in\mathbb{N}$, find
\begin{equation}
\label{eq:problem_nonlinear}
\begin{aligned}
    \max_{\Delta\mathbf{X}}\quad & \bigl\|\mathbf{H}^{(L)}(\mathbf{X}+\Delta\mathbf{X}) \;-\; \mathbf{H}^{(L)}(\mathbf{X})\bigr\|_2 \\
    \mathrm{s.t.}\quad
    & \|\Delta\mathbf{X}\|_0 \;\le\; \Gamma_{\mathrm{total}}, \\
    & \mathbf{X} + \Delta\mathbf{X} \;\in\; \{0,1\}^{N \times D}.
\end{aligned}
\end{equation}
\end{problem}
The combined constraint $\mathbf{X}+\Delta\mathbf{X}\in\{0,1\}^{N\times D}$ together with $\mathbf{X}\in\{0,1\}^{N\times D}$ implies $\Delta\mathbf{X}\in\{-1,0,1\}^{N\times D}$: at each entry, only feasible flips ($0\!\to\!1$ or $1\!\to\!0$) are allowed. While we defined Problem~\ref{prob:nonlinear} for binary-feature graphs, for continuous-feature graphs (e.g., Cora-ML and PubMed), this formulation can be adapted by replacing the $\ell_0$-norm with the Frobenius norm to measure the budget. Problem~\ref{prob:nonlinear} is the worst-case feature attack \emph{as the adversary actually faces it}: the residual depends nonlinearly on $\Delta\mathbf{X}$ through the ReLU activations of $\mathbf{H}^{(L)}$, and the feasible set is combinatorial. Solving it exactly is therefore intractable; existing attacks~\cite{zugner2018adversarial,wu2019adversarial,xu2019topology} accordingly approach it via gradient-based heuristics or greedy procedures. We take a different route in Section~\ref{sec:feature_attack}, where we define a tractable surrogate by linearizing the ReLU and (on binary-feature graphs) replacing the discrete feature-flip constraint by an equivalent Frobenius budget, then solve the resulting continuous problem in closed form.

\section{Spectral Approach to Feature Attacks}
\label{sec:feature_attack}

\paragraph{From Problem~\ref{prob:nonlinear} to a tractable surrogate.}
We define a tractable surrogate by linearizing the ReLU activations of~\eqref{eq:gcn_layerwise}, following the standard surrogate construction of~\cite{zugner2018adversarial,wu2019adversarial}; the feature residual~\eqref{eq:residual_def} then admits the closed form
\begin{equation}
\label{eq:feature_residual}
    \mathbf{R}_f^{(L)}(\Delta\mathbf{X}) \;=\; \hat{\mathbf{A}}^L\,\Delta\mathbf{X}\,\mathbf{W},
\end{equation}
where $\mathbf{W} = \mathbf{W}^{(0)}\mathbf{W}^{(1)}\cdots\mathbf{W}^{(L-1)} \in \mathbb{R}^{D \times C}$ is the composed weight matrix and $\hat{\mathbf{A}}^L$ encodes the $L$-hop neighborhood structure through which perturbations propagate (full derivation in Appendix~\ref{app:residual}). For continuous-feature graphs (Cora-ML, PubMed), a Frobenius-norm budget on $\Delta\mathbf{X}$ is already the natural formulation. For binary-feature graphs (Cora, CiteSeer), every feasible $\Delta\mathbf{X}$ lies in $\{-1,0,1\}^{N\times D}$, so $\|\Delta\mathbf{X}\|_F^2 = \|\Delta\mathbf{X}\|_0 \le \Gamma_{\mathrm{total}}$ and replacing the discrete feature-flip constraint by a Frobenius bound is a genuine relaxation (Appendix~\ref{app:optimization}). Setting $\gamma\triangleq\Gamma_{\mathrm{total}}$ and using the equivalence $\max\|\cdot\|_2 \Leftrightarrow \max\|\cdot\|_2^2$ on a compact set, we obtain the \emph{linear cost, continuous feature attack problem}:\footnote{The two qualifiers refer to the simplifications relative to Problem~\ref{prob:nonlinear}: \emph{linear cost} (the residual is now linear in $\Delta\mathbf{X}$, since the ReLU non-linearities have been replaced by the identity), and \emph{continuous feature} (the discrete feature-flip set $\{-1,0,1\}^{N\times D}$ has been replaced by the continuous Frobenius ball).}
\begin{equation}
\label{eq:relaxed_problem}
\begin{aligned}
    \max_{\Delta \mathbf{X}} \quad & \|\hat{\mathbf{A}}^L \Delta \mathbf{X}\, \mathbf{W}\|_2^2 \\
    \mathrm{s.t.} \quad & \|\Delta \mathbf{X}\|_F^2 \leq \gamma.
\end{aligned}
\end{equation}
By feasible-set inclusion and the 1-Lipschitz property of ReLU, the optimal value of~\eqref{eq:relaxed_problem} upper-bounds that of Problem~\ref{prob:nonlinear}. The remainder of this section solves~\eqref{eq:relaxed_problem} in closed form via spectral decomposition; Appendix~\ref{app:rank_one_validation} validates empirically that the resulting closed-form attack remains the strongest rank-one attack we tested once the full ReLU non-linearity is reinstated.

\paragraph{Spectral decomposition.}
We expand the perturbation $\Delta\mathbf{X}$ in the eigenbasis of $\hat{\mathbf{A}}$. Because $\hat{\mathbf{A}}$ is symmetric, the eigenvectors of any (positive) power of $\hat{\mathbf{A}}$ coincide with those of $\hat{\mathbf{A}}$ itself, so working with $\hat{\mathbf{A}}^{2L}$ (which appears naturally after squaring the residual~\eqref{eq:feature_residual}) does not change the eigenbasis. Let $(\mu_1, \mathbf{v}_1), \ldots, (\mu_N, \mathbf{v}_N)$ and $(\lambda_1, \mathbf{w}_1), \ldots, (\lambda_D, \mathbf{w}_D)$ be the eigenvalue--eigenvector pairs of $\hat{\mathbf{A}}^{2L}$ and of the weight Gram matrix $\mathbf{W}\mathbf{W}^T$, respectively, ordered so that $\mu_1 \geq \cdots \geq \mu_N \geq 0$ and $\lambda_1 \geq \cdots \geq \lambda_D \geq 0$. Since $\{\mathbf{v}_1,\ldots,\mathbf{v}_N\}$ is an orthonormal basis of $\mathbb{R}^N$, every perturbation $\Delta\mathbf{X}\in\mathbb{R}^{N\times D}$ can be written uniquely as
\begin{equation}
\label{eq:eigen_expansion}
    \Delta\mathbf{X} \;=\; \sum_{i=1}^N \mathbf{v}_i\,\boldsymbol{\epsilon}_i^T,
    \qquad \boldsymbol{\epsilon}_i \in \mathbb{R}^D.
\end{equation}
Reading~\eqref{eq:eigen_expansion} column-wise, the $j$-th feature column of $\Delta\mathbf{X}$ is $\Delta\mathbf{X}_{:,j} = \sum_{i=1}^N \epsilon_i^{(j)}\,\mathbf{v}_i$, where $\boldsymbol{\epsilon}_i = (\epsilon_i^{(1)},\ldots,\epsilon_i^{(D)})^T \in \mathbb{R}^D$ collects, in its $j$-th entry, the coefficient of feature $j$ along eigenmode $i$. Substituting~\eqref{eq:eigen_expansion} into~\eqref{eq:relaxed_problem} reformulates the relaxed problem as an optimization over the new variables $\{\boldsymbol{\epsilon}_i\}_{i=1}^N$ (full derivation in Appendix~\ref{app:thm_proof}):
\begin{equation}
\label{eq:reformulated_final}
\begin{aligned}
    &\max_{\{\boldsymbol{\epsilon}_i\}_{i=1}^N} \quad \lambda_{\max}\!\left(\sum_{i=1}^{N} \mu_i \mathbf{W}^T \boldsymbol{\epsilon}_i \boldsymbol{\epsilon}_i^T \mathbf{W}\right) \\[6pt]
    &\text{s.t.} \quad \sum_{i=1}^{N} \|\boldsymbol{\epsilon}_i\|_2^2 \leq \gamma.
\end{aligned}
\end{equation}
The coefficients $\{\boldsymbol{\epsilon}_i\}_{i=1}^N$ are the new optimization variables. The graph and the weights now enter the objective in decoupled positions: the graph contributes only the scalar weights $\mu_i$ in front of each rank-one summand, while the weight matrix $\mathbf{W}$ multiplies all summands uniformly from outside. This reformulation is the starting point for the main result in Section~\ref{sec:main_result}.

\subsection{Separation Principle for Optimal Perturbation Structure}
\label{sec:main_result}

We can now state the central theoretical result of the paper, a closed-form characterization of the optimal feature perturbation; the full proof is given in Appendix~\ref{app:thm_proof}.

\begin{theorem}[Separation Principle for Optimal Perturbation Structure]
\label{thm:separation}
Consider an $L$-layer GCN defined by~\eqref{eq:gcn_layerwise} with the associated linear cost, continuous feature attack problem~\eqref{eq:relaxed_problem}. Suppose that the composed weight matrix is
$\mathbf{W} = \mathbf{W}^{(0)}\mathbf{W}^{(1)}\cdots\mathbf{W}^{(L-1)}$
and that $(\lambda_1, \mathbf{w}_1)$ is the dominant eigenpair of $\mathbf{W}\mathbf{W}^T$.
Then an optimal solution of~\eqref{eq:relaxed_problem} is given by
\begin{equation}
\label{eq:optimal_DX}
\boxed{\;
\Delta\mathbf{X}^\star \;=\; \sqrt{\gamma}\;
\frac{\tilde{\mathbf{D}}^{1/2}\,\mathbf{1}_N}{\|\tilde{\mathbf{D}}^{1/2}\,\mathbf{1}_N\|_2}\;\mathbf{w}_1^T,
\;}
\end{equation}
achieving the optimal value $\lambda_1\gamma$. The optimal perturbation is a rank-one matrix and admits a separation into a graph-dependent factor $\tilde{\mathbf{D}}^{1/2}\mathbf{1}_N/\|\tilde{\mathbf{D}}^{1/2}\mathbf{1}_N\|_2$, depending only on node degrees with per-node magnitude $\|\Delta\mathbf{X}^\star_{i,:}\|_2 \propto \sqrt{\tilde{d}_i}$, and a weight-dependent factor $\mathbf{w}_1$, depending only on the composed weights. The graph-dependent factor is independent of depth $L$; only the weight-dependent factor adapts with depth, through the change in $\mathbf{W}$.
\end{theorem}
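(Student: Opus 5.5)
The plan is to prove the statement by a matching upper bound and lower bound, working directly with~\eqref{eq:relaxed_problem} rather than through the reformulation~\eqref{eq:reformulated_final}. Here $\|\cdot\|_2$ on a matrix is the spectral norm. For the upper bound, submultiplicativity of the spectral norm gives
\begin{equation*}
\|\hat{\mathbf{A}}^L \Delta\mathbf{X}\,\mathbf{W}\|_2 \;\le\; \|\hat{\mathbf{A}}\|_2^L\,\|\Delta\mathbf{X}\|_2\,\|\mathbf{W}\|_2 .
\end{equation*}
Two of these factors are routine. First, $\|\Delta\mathbf{X}\|_2 \le \|\Delta\mathbf{X}\|_F \le \sqrt{\gamma}$ on the feasible set. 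Second, $\|\mathbf{W}\|_2^2 = \lambda_{\max}(\mathbf{W}\mathbf{W}^T) = \lambda_1$. It remains to show $\|\hat{\mathbf{A}}\|_2 \le 1$. Once that holds, every feasible $\Delta\mathbf{X}$ has objective value at most $\gamma\lambda_1$.

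The step needing the most care is the spectral claim about $\hat{\mathbf{A}}$, but it is standard. I would note that $\hat{\mathbf{A}}$ is symmetric and similar to the row-stochastic matrix $\tilde{\mathbf{D}}^{-1}\tilde{\mathbf{A}}$, since $\hat{\mathbf{A}} = \tilde{\mathbf{D}}^{1/2}(\tilde{\mathbf{D}}^{-1}\tilde{\mathbf{A}})\tilde{\mathbf{D}}^{-1/2}$. Its eigenvalues are therefore real, and by Gershgorin applied to the stochastic matrix they all lie in $[-1,1]$. Symmetry then gives $\|\hat{\mathbf{A}}\|_2 = \max_i|\text{eigenvalue}_i| \le 1$. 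Nonnegative edge weights are what make $\tilde{\mathbf{D}}^{-1}\tilde{\mathbf{A}}$ stochastic.

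Next I would exhibit the eigenvalue $1$ explicitly. Set $\mathbf{u} = \tilde{\mathbf{D}}^{1/2}\mathbf{1}_N/\|\tilde{\mathbf{D}}^{1/2}\mathbf{1}_N\|_2$. Then
\begin{equation*}
\hat{\mathbf{A}}\tilde{\mathbf{D}}^{1/2}\mathbf{1}_N = \tilde{\mathbf{D}}^{-1/2}\tilde{\mathbf{A}}\mathbf{1}_N = \tilde{\mathbf{D}}^{-1/2}\tilde{\mathbf{D}}\mathbf{1}_N = \tilde{\mathbf{D}}^{1/2}\mathbf{1}_N ,
\end{equation*}
so $\hat{\mathbf{A}}^L\mathbf{u} = \mathbf{u}$ for every $L$. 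This is exactly why the graph factor does not depend on depth. It also shows that $\mu_1 = 1$ with $\mathbf{v}_1 = \mathbf{u}$, and no connectivity assumption is needed. If the graph is disconnected, $\mu_1$ is merely repeated, which does not affect optimality, only uniqueness.

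For the lower bound, I would plug in $\Delta\mathbf{X}^\star = \sqrt{\gamma}\,\mathbf{u}\mathbf{w}_1^T$. It is feasible because $\|\Delta\mathbf{X}^\star\|_F = \sqrt{\gamma}\,\|\mathbf{u}\|_2\|\mathbf{w}_1\|_2 = \sqrt{\gamma}$. Its residual is
\begin{equation*}
\hat{\mathbf{A}}^L\Delta\mathbf{X}^\star\mathbf{W} = \sqrt{\gamma}\,\mathbf{u}\,(\mathbf{W}^T\mathbf{w}_1)^T ,
\end{equation*}
a rank-one matrix with spectral norm $\sqrt{\gamma}\,\|\mathbf{W}^T\mathbf{w}_1\|_2 = \sqrt{\gamma\,\mathbf{w}_1^T\mathbf{W}\mathbf{W}^T\mathbf{w}_1} = \sqrt{\gamma\lambda_1}$. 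Squaring gives $\gamma\lambda_1$, which matches the upper bound, so $\Delta\mathbf{X}^\star$ is optimal.

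The remaining claims then follow. The per-node magnitude is $\|\Delta\mathbf{X}^\star_{i,:}\|_2 = \sqrt{\gamma}\,u_i \propto \sqrt{\tilde d_i}$. Depth enters only through $\mathbf{W}$ and hence through $\mathbf{w}_1$.

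As a consistency check, I would also read the argument through~\eqref{eq:reformulated_final}. There, putting all the budget on $\boldsymbol{\epsilon}_1 = \sqrt{\gamma}\,\mathbf{w}_1$ with $\mu_1 = 1$ attains the same bound, because $\lambda_{\max}(\sum_i \mu_i \mathbf{W}^T\boldsymbol{\epsilon}_i\boldsymbol{\epsilon}_i^T\mathbf{W}) \le \mu_1\lambda_1\sum_i\|\boldsymbol{\epsilon}_i\|_2^2$.
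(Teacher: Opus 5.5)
Your proof is correct, and it reaches the result by a more direct route than the paper.

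The paper works in the eigenbasis. It expands $\Delta\mathbf{X}=\sum_i\mathbf{v}_i\boldsymbol{\epsilon}_i^T$ in the eigenbasis of $\hat{\mathbf{A}}^{2L}$ and rewrites the objective as $\lambda_{\max}\bigl(\sum_i\mu_i\mathbf{W}^T\boldsymbol{\epsilon}_i\boldsymbol{\epsilon}_i^T\mathbf{W}\bigr)$. It then sandwiches this between the single-summand term (PSD monotonicity) and $\sum_i\mu_i\|\mathbf{W}^T\boldsymbol{\epsilon}_i\|_2^2$ (Weyl). Optimizing both ends with the Rayleigh quotient gives $\lambda_1\mu_1\gamma$ for an abstract dominant eigenpair. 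Only then does it identify $\mu_1=1$ and $\mathbf{v}_1\propto\tilde{\mathbf{D}}^{1/2}\mathbf{1}_N$, via the normalized Laplacian $\hat{\mathbf{L}}=\mathbf{I}-\hat{\mathbf{A}}$, quoting that its spectrum lies in $[0,2]$ and assuming connectivity and aperiodicity.

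You instead bound the spectral norm by submultiplicativity. You prove $\|\hat{\mathbf{A}}\|_2\le 1$ yourself, from the similarity to the row-stochastic matrix $\tilde{\mathbf{D}}^{-1}\tilde{\mathbf{A}}$ plus Gershgorin, and then verify attainment on the explicit candidate.

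What your route buys:
\begin{itemize}
\item It is shorter and self-contained, with no eigenbasis expansion and no Weyl inequality.
\item It shows that connectivity plays no role in optimality. $\tilde{\mathbf{D}}^{1/2}\mathbf{1}_N$ is always a top eigenvector of $\hat{\mathbf{A}}$, and a repeated eigenvalue $1$ only costs uniqueness of the graph factor.
\item Depth-independence is immediate from $\hat{\mathbf{A}}^L\mathbf{u}=\mathbf{u}$.
\item You lose no generality: the same chain gives the bound $\mu_1\lambda_1\gamma$, attained by $\sqrt{\gamma}\,\mathbf{v}_1\mathbf{w}_1^T$, for any symmetric propagation operator.
\end{itemize}

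What the paper's route buys is mainly structural insight. The reformulation \eqref{eq:reformulated_final} shows the graph entering only through the scalars $\mu_i$, while the weights act uniformly from outside. The Weyl step also explains why the budget collapses onto a single eigenmode. That picture is what the paper uses to motivate the name ``separation principle.''
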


\begin{proof}[Proof sketch]
The eigenbasis expansion $\Delta\mathbf{X}=\sum_i\mathbf{v}_i\boldsymbol{\epsilon}_i^\top$ in the basis of $\hat{\mathbf{A}}^{2L}$ reduces~\eqref{eq:reformulated_final} to maximizing $\lambda_{\max}(\sum_i\mu_i\mathbf{W}^T\boldsymbol{\epsilon}_i\boldsymbol{\epsilon}_i^T\mathbf{W})$ subject to $\sum_i\|\boldsymbol{\epsilon}_i\|_2^2\leq\gamma$. A Weyl + Rayleigh sandwich (Appendix~\ref{app:thm_proof}) shows the optimum is attained at $\boldsymbol{\epsilon}_1^\star=\sqrt{\gamma}\,\mathbf{w}_1$, $\boldsymbol{\epsilon}_i^\star=\mathbf{0}$ for $i\geq 2$, with optimal value $\lambda_1\mu_1\gamma$. The dominant eigenpair of $\hat{\mathbf{A}}^{2L}$ is identified in closed form via the spectral theory of the normalized Laplacian, giving $\mu_1=1$ (so $\lambda_1\mu_1\gamma = \lambda_1\gamma$) and $\mathbf{v}_1 = \tilde{\mathbf{D}}^{1/2}\mathbf{1}_N/\|\tilde{\mathbf{D}}^{1/2}\mathbf{1}_N\|_2$, independent of $L$ since powers of a symmetric matrix share its eigenvectors. Substituting these expressions back yields~\eqref{eq:optimal_DX}.
\end{proof}

\paragraph{Discussion.}
The factorization $\Delta\mathbf{X}^\star = \sqrt\gamma\,\mathbf{v}_1\mathbf{w}_1^\top$ has three constructive consequences. First, the optimum admits a closed-form expression from a single eigendecomposition, so problem~\eqref{eq:relaxed_problem} need not be solved by a gradient-based procedure as in PGD~\cite{xu2019topology}, IG-Attack~\cite{wu2019adversarial}, or Nettack~\cite{zugner2018adversarial}. Second, the optimal value $\lambda_1\gamma$ depends on the weight spectrum only through $\lambda_1$, since $\mu_1=1$ is fixed by the normalization of $\hat{\mathbf{A}}$. Third, the graph factor depends only on degrees and is therefore architecture-invariant, so the perturbation transfers across architectures to the extent that the top weight singular vector $\mathbf{w}_1$ is shared. The closed-form construction is also markedly cheaper than gradient-based baselines (Section~\ref{sec:targeted_eff}). The principle also offers a theoretical perspective on the empirical degree bias reported by GraD~\cite{liu2023towards}: degree-proportional allocation arises naturally from the optimum, not from gradient-estimation artifacts. Full discussion in Appendix~\ref{app:thm_discussion}.

When the attacker controls only a subset $\mathcal{A}\subsetneq\mathcal{V}$, the full-access optimum is infeasible. We restrict the degree-proportional allocation to $\mathcal{A}$:
\begin{proposition}[Restricted-Access Heuristic]
\label{prop:restricted}
Let $\mathcal{A}\subseteq\mathcal{V}$ and let $[\mathbf{b}_\mathcal{A}]_i=\sqrt{\tilde{d}_i}$ for $i\in\mathcal{A}$ and $0$ otherwise. Then $\Delta\mathbf{X}^{\mathcal{A}}=\sqrt{\gamma}\,(\mathbf{b}_\mathcal{A}/\|\mathbf{b}_\mathcal{A}\|_2)\,\mathbf{w}_1^\top$ is feasible, supported on $\mathcal{A}$, and preserves the graph--weight separation.
\end{proposition}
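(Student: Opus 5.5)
The plan is to verify the three claims of Proposition~\ref{prop:restricted} one at a time; each follows from the definition of $\Delta\mathbf{X}^{\mathcal{A}}$. Throughout I assume $\mathcal{A}\neq\emptyset$. Since $\tilde{\mathbf{A}}=\mathbf{A}+\mathbf{I}_N$ with non-negative weights, every self-loop-augmented degree satisfies $\tilde d_i\ge 1>0$. Hence $\mathbf{b}_\mathcal{A}\neq\mathbf{0}$ and the normalization $\mathbf{b}_\mathcal{A}/\|\mathbf{b}_\mathcal{A}\|_2$ is well defined. I will state this nondegeneracy explicitly at the start, because it is the only place any hypothesis beyond $\mathcal{A}\subseteq\mathcal{V}$ is used.

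\textbf{Feasibility.} Feasibility here means membership in the Frobenius ball of~\eqref{eq:relaxed_problem}. I would use the identity $\|\mathbf{u}\mathbf{w}^\top\|_F=\|\mathbf{u}\|_2\|\mathbf{w}\|_2$ for rank-one matrices. The vector $\mathbf{w}_1$ is a unit eigenvector of $\mathbf{W}\mathbf{W}^T$, as in Theorem~\ref{thm:separation}. This gives $\|\Delta\mathbf{X}^{\mathcal{A}}\|_F^2=\gamma\cdot 1\cdot 1=\gamma$, so the budget constraint holds with equality.

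\textbf{Support.} Row $i$ of $\Delta\mathbf{X}^{\mathcal{A}}$ equals $\sqrt{\gamma}\,[\mathbf{b}_\mathcal{A}]_i\,\mathbf{w}_1^\top/\|\mathbf{b}_\mathcal{A}\|_2$. This row is zero whenever $i\notin\mathcal{A}$, so only features of attacker-controlled nodes are modified.

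\textbf{Separation.} The matrix $\Delta\mathbf{X}^{\mathcal{A}}$ is the outer product of two factors. The first is $\mathbf{b}_\mathcal{A}/\|\mathbf{b}_\mathcal{A}\|_2$, which depends only on $\tilde{\mathbf{D}}$ and $\mathcal{A}$. The second is $\mathbf{w}_1$, which depends only on $\mathbf{W}$. It is therefore rank one with the same factorization structure as~\eqref{eq:optimal_DX}. For nodes in $\mathcal{A}$ it keeps the per-node magnitude $\|\Delta\mathbf{X}^{\mathcal{A}}_{i,:}\|_2\propto\sqrt{\tilde d_i}$. I would also note a consistency check: when $\mathcal{A}=\mathcal{V}$, $\mathbf{b}_\mathcal{A}=\tilde{\mathbf{D}}^{1/2}\mathbf{1}_N$, and the construction recovers the optimum of Theorem~\ref{thm:separation}.

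None of these steps is a real obstacle. The only subtlety is interpretive: ``feasible'' must be read with respect to the relaxed Frobenius budget together with the access restriction. It does not mean the binary flip set of Problem~\ref{prob:nonlinear}, since $\Delta\mathbf{X}^{\mathcal{A}}$ is not integer-valued. I would state that scope explicitly. The proposition labels this construction a heuristic, so I would make no optimality claim. If desired, a remark could compare it with the restricted optimum obtained from the top eigenvector of $\mathbf{P}_\mathcal{A}\hat{\mathbf{A}}^{2L}\mathbf{P}_\mathcal{A}$, where $\mathbf{P}_\mathcal{A}$ is the coordinate projector onto $\mathcal{A}$.
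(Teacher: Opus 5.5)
Your proposal is correct and matches the paper's implicit argument. The paper gives no separate proof of Proposition~\ref{prop:restricted}; it treats feasibility, support, and separation as immediate from the construction, and your rank-one identity $\|\mathbf{u}\mathbf{w}^\top\|_F=\|\mathbf{u}\|_2\|\mathbf{w}\|_2$ is the same budget check it uses in the proof of Proposition~\ref{prop:targeted}. Your explicit assumption $\mathcal{A}\neq\emptyset$, with $\tilde d_i\ge 1$ guaranteeing $\mathbf{b}_\mathcal{A}\neq\mathbf{0}$, fills a small gap the paper leaves unstated. Your closing remark is also correct: the true restricted-access optimum replaces $\mathbf{b}_\mathcal{A}/\|\mathbf{b}_\mathcal{A}\|_2$ by the top eigenvector of $\mathbf{P}_\mathcal{A}\hat{\mathbf{A}}^{2L}\mathbf{P}_\mathcal{A}$, which shows concretely why the paper only calls this construction a heuristic.
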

The continuous direction $\Delta\mathbf{X}^{\mathcal{A}}$ also yields a feasible integer-flip attack for Problem~\ref{prob:nonlinear} via greedy top-$\Gamma_{\text{total}}$ discretization on the outer-product score (full construction in Appendix~\ref{app:optimization}). Although we do not claim optimality of the attack proposed in Proposition~\ref{prop:restricted}, Appendix~\ref{app:rank_one_validation} (Figure~\ref{fig:spectral-validation}b) shows it consistently dominates uniform and random-support baselines.

We refer to the family of rank-one attacks derived from the separation principle --- the full-access optimum~\eqref{eq:optimal_DX} of Theorem~\ref{thm:separation} and the restricted-access version of Proposition~\ref{prop:restricted} --- as \textbf{SepAttack}. In both variants the spatial factor depends only on the graph (through node degrees), so the attack does not require gradient computation through the model.

\section{Adversarial Training}
\label{sec:robust_training}

Adversarial training~\cite{madry2018pgd} modifies the training procedure so that the trained network is robust to adversarial perturbations of the input, by replacing the standard empirical risk by its worst-case counterpart over an adversarial set. For feature attacks on a GCN of parameters $\theta$ with the Frobenius budget of Section~\ref{sec:threat_model}, this gives the standard min--max formulation
\begin{equation}
\label{eq:minmax}
    \min_\theta\;\mathbb{E}_{(\mathbf{A},\mathbf{X},\mathbf{y})}\Bigl[\max_{\|\Delta\mathbf{X}\|_F\le\delta}\,\mathcal{L}_{\mathrm{CE}}\bigl(f_\theta(\mathbf{A},\mathbf{X}+\Delta\mathbf{X}),\mathbf{y}\bigr)\Bigr].
\end{equation}
Solving the inner maximization exactly at every training step is intractable: it is itself an adversarial-attack problem against the network being trained, and the standard remedy --- approximating it with iterative gradient ascent (e.g., PGD-$k$) for $k$ steps inside every parameter update --- multiplies the per-step training cost by $k$ and is the dominant cost of adversarial training.

We propose a different route. Theorem~\ref{thm:separation} gives a closed-form expression for the worst-case attack $\Delta\mathbf{X}^\ast$ on the linearized model under the Frobenius budget, and we use this expression to approximate the inner maximization in~\eqref{eq:minmax} with no inner optimization at all. The closed-form worst case is the rank-one direction $\Delta\mathbf{X}^\ast=\delta\,\tilde{\mathbf{D}}^{1/2}\mathbf{1}_N/\|\tilde{\mathbf{D}}^{1/2}\mathbf{1}_N\|_2\cdot\mathbf{w}_1^\top$, where $\mathbf{w}_1$ is the top singular vector of $\mathbf{W}=\mathbf{W}^{(0)}\cdots\mathbf{W}^{(L-1)}$ at the current $\theta$. Since ReLU makes the model's response to $\pm\Delta\mathbf{X}^\ast$ asymmetric, we regularize \emph{bilaterally} along this direction:
\begin{equation}
\label{eq:robust_loss}
\begin{aligned}
    \mathcal{L}_{\mathrm{robust}} \;=\;& \mathcal{L}_{\mathrm{CE}}\bigl(f_\theta(\mathbf{A},\mathbf{X}),\mathbf{y}\bigr) + \lambda\|\mathbf{W}^{(0)}\|_F^2 \\
    & + \frac{\alpha}{|\mathcal{V}_L|}\!\sum_{u\in\mathcal{V}_L}\sum_{s\in\{+,-\}}\!\bigl\|f_\theta(\mathbf{A},\mathbf{X}+s\Delta\mathbf{X}^\ast)_u - f_\theta(\mathbf{A},\mathbf{X})_u\bigr\|_2^2.
\end{aligned}
\end{equation}
A natural alternative would be to penalize the spectral norm $\sigma_{\max}(\mathbf{W})$ directly, but this fails: shrinking $\mathbf{W}$ scales the adversarial residual and the clean classification margins by the same factor, so the attack-versus-margin ratio is unchanged (Appendix~\ref{app:naive_penalty}). Robustness must therefore be enforced at the model's output level, which is what~\eqref{eq:robust_loss} does. We refer to the resulting training procedure as \textbf{SepAT} (separation-principle adversarial training). SepAT recomputes $\Delta\mathbf{X}^\ast$ each step via SVD of the composed weights $\mathbf{W}=\mathbf{W}^{(0)}\cdots\mathbf{W}^{(L-1)}\in\mathbb{R}^{D\times C}$; one power-iteration step costs $O(DC)$ and is negligible relative to a forward/backward pass, even on our largest dataset (PubMed). Hyperparameters, the curriculum schedule, and the bilateral-regularization derivation are in Appendix~\ref{app:defense_derivation}; empirical robustness results are reported in Section~\ref{sec:sepat_results}.


\section{Numerical Experiments}
\label{sec:results}

We evaluate on four citation networks: Cora ($N{=}2{,}708$, $D{=}1{,}433$, $C{=}7$, binary BoW), Cora-ML ($N{=}2{,}810$, $D{=}2{,}879$, $C{=}7$, continuous), CiteSeer ($N{=}3{,}327$, $D{=}3{,}703$, $C{=}6$, binary BoW), and PubMed ($N{=}19{,}717$, $D{=}500$, $C{=}3$, TF-IDF), with three heterophilous WebKB benchmarks (Texas, Wisconsin, Cornell) used in Section~\ref{sec:heterophilous}. The target model is a 2-layer GCN~\cite{kipf2016semi} (16 hidden units, ReLU, dropout~0.5) trained on a 10/10/80 split following the Nettack protocol~\cite{zugner2018adversarial}; results are averaged over 5 seeds (mean $\pm$ std). Targeted baselines are Nettack~\cite{zugner2018adversarial} (direct/influence modes) and IG-Attack~\cite{wu2019adversarial} (3 integration steps); continuous-budget baselines are PGD-40~\cite{xu2019topology} and FGM (the $\ell_2$ analog of FGSM~\cite{goodfellow2014explaining}). Transfer targets are GAT~\cite{velivckovic2017graph}, GraphSAGE~\cite{hamilton2017inductive}, APPNP~\cite{gasteiger2018predict}, and JKNet~\cite{xu2018representation}. SepAttack uses $\pm\sqrt{\tilde{d}}$ paired with $\mathbf{w}_1$ as in Theorem~\ref{thm:separation}; both signs are tested and the more effective one retained (best-sign selection: two forward passes, still gradient-free), since ReLU breaks the sign symmetry of the linearized optimum. Targeted experiments deploy the rank-one optimum of Proposition~\ref{prop:targeted}.

Sections~\ref{sec:targeted_eff}--\ref{sec:sepat_results} compare SepAttack against state-of-the-art baselines on targeted attack effectiveness, heterophilous benchmarks, cross-architecture transfer, and adversarial training; further validation, including the symmetric factor-knockout ablation, is in Appendix~\ref{app:rank_one_validation}.

\subsection{Targeted Attack Effectiveness}
\label{sec:targeted_eff}

\begin{table}[t]
\centering
\caption{%
  Targeted feature attack comparison: overall success rate (\%)
  on the preselected target set (40~correctly-classified test
  nodes per dataset), averaged over 5~seeds.
  $\gamma$: features perturbed per target.
  Bold: best per column within each dataset.%
}
\label{tab:targeted}
\renewcommand{\arraystretch}{1.15}
\footnotesize
\begin{tabular}{@{}l l ccccc@{}}
\toprule
& & \multicolumn{5}{c}{\textbf{Success Rate (\%)}} \\
\cmidrule(lr){3-7}
\textbf{Method / Mode}
  & \textbf{Time}
  & $\gamma\!=\!1$ & $\gamma\!=\!2$ & $\gamma\!=\!3$ & $\gamma\!=\!4$ & $\gamma\!=\!5$ \\
\midrule
\multicolumn{7}{l}{\textit{Cora}} \\
\addlinespace[2pt]
\rowcolor{ourscolor}
SepAttack (our attack) / Direct    & 5\,ms
  & \best{13.0} & \best{17.5} & \best{20.0} & \best{22.0} & \best{23.0} \\
\rowcolor{ourscolor}
SepAttack / Influence & 6\,ms
  & 9.5 & 13.0 & 16.0 & 18.5 & 19.0 \\
Nettack / Direct     & 352\,ms
  & 8.0 & 15.0 & 17.5 & 18.5 & 19.5 \\
Nettack / Influence  & 407\,ms
  & 6.0 & 10.0 & 12.0 & 16.0 & 16.5 \\
IG-Attack / Direct   & 154\,s
  & 3.5 & 6.0 & 10.0 & 12.5 & 13.0 \\
\addlinespace[4pt]
\multicolumn{7}{l}{\textit{CiteSeer}} \\
\addlinespace[2pt]
\rowcolor{ourscolor}
SepAttack / Direct    & 12\,ms
  & \best{8.5} & \best{11.0} & \best{14.0} & \best{17.5} & \best{19.5} \\
\rowcolor{ourscolor}
SepAttack / Influence & 13\,ms
  & 7.5 & 9.0 & 11.5 & 13.0 & 15.5 \\
Nettack / Direct     & 417\,ms
  & 5.5 & 8.0 & 8.5 & 9.5 & 10.5 \\
Nettack / Influence  & 489\,ms
  & 5.0 & 7.5 & 8.5 & 9.5 & 10.0 \\
IG-Attack / Direct   & 524\,s
  & 2.0 & 3.0 & 4.5 & 5.0 & 6.0 \\
\bottomrule
\end{tabular}

\vspace{2pt}
{\scriptsize
Times are wall-clock per target node.
SepAttack achieves the highest success rate in all
20~comparison cells (2~datasets $\times$ 2~modes $\times$ 5~budgets).
}
\end{table}

Table~\ref{tab:targeted} compares SepAttack against
Nettack~\cite{zugner2018adversarial} and IG-Attack~\cite{wu2019adversarial}
in the \emph{targeted} setting: the attacker selects a single target node $t$ and aims to flip its prediction by maximizing the per-target logit gap $z_{t,c'} - z_{t,c^\ast}$ between the runner-up class $c'$ and the true class $c^\ast$, the objective of Nettack and IG-Attack. We derive the rank-one optimum for this targeted objective in Appendix~\ref{app:targeted_proof} (Proposition~\ref{prop:targeted}): the separation structure persists but specializes --- the graph factor becomes the $L$-hop influence column $\hat{\mathbf{A}}^L[\mathcal{A},t]$, encoding how strongly each node in the attacker set $\mathcal{A}$ propagates to $t$, and the weight factor becomes the logit-gap direction $\mathbf{W}(\mathbf{e}_{c'} - \mathbf{e}_{c^\ast})$. The integer-flip attack deployed in Table~\ref{tab:targeted} is its greedy discretization. Following Nettack's protocol, both \emph{direct} ($\mathcal{A} = \{t\}$) and \emph{influence} ($\mathcal{A} = $ neighbors of $t$, target itself untouched) modes are reported.
SepAttack achieves the highest success rate in all 20~cells
(2~datasets $\times$ 2~modes $\times$ 5~budgets): on Cora at
$\gamma\!=\!5$ in direct mode, it reaches $23.0\%$ versus $19.5\%$
for Nettack and $13.0\%$ for IG-Attack.
SepAttack requires approximately $5$\,ms per target
versus ${\sim}370$\,ms for Nettack, a ${\sim}70\times$ speedup
with zero gradient computation.

\subsection{Attack Effectiveness on Heterophilous Benchmarks}
\label{sec:heterophilous}

We stress-test the framework on three heterophilous graphs from the WebKB collection: Texas, Wisconsin, and Cornell~\citep{pei2020geomgcn}. These graphs lie outside the regime in which Theorem~\ref{thm:separation}'s rank-one approximation is tightest: their edge homophily is below $0.3$ and their normalized adjacency has spectral radius $\lambda_1{\approx}2.23\text{--}2.30$, well above the $\lambda_1{\approx}1.0$ regime of Cora, CiteSeer, and PubMed. We train a 2-layer GCN on the standard WebKB 60/20/20 split (index~0)~\citep{zhu2020beyond,chien2021gprgnn} and report misclassification at $\delta{=}10$ in Table~\ref{tab:heterophilous}, averaged over 5 seeds.

\begin{table}[t]
\centering
\caption{\textbf{Heterophilous benchmarks.} Misclassification rate (\%) at full attack budget $\delta{=}10$ on three WebKB heterophilous graphs~\citep{pei2020geomgcn}. Mean~$\pm$~std over 5 seeds; standard 60/20/20 split (index~0)~\citep{zhu2020beyond,chien2021gprgnn} (different from the 10/10/80 Nettack protocol of Section~\ref{sec:results}). PGD-40 follows the Madry~\citep{madry2018pgd} L2 convention (5 random restarts, $\alpha{=}2\delta/T$, normalized step); FGM is the L2 variant. The normalized-adjacency spectral radius is $\lambda_1{\approx}2.23$ on Wisconsin and $\lambda_1{\approx}2.30$ on Texas/Cornell, vs.\ $\lambda_1{\approx}1.0$ on the citation networks.}
\label{tab:heterophilous}
\small
\begin{tabular}{l r r r r}
\toprule
\textbf{Dataset} & \textbf{Clean mis.} & \textbf{FGM} & \textbf{PGD-40} & \textbf{SepAttack} \\
\midrule
Texas     & 38.4 & 81.6 $\pm$ 22.3 & 82.7 $\pm$ 22.3 & \best{89.2 $\pm$ \phantom{0}2.7} \\
Wisconsin & 47.1 & 64.7 $\pm$ 12.8 & 65.9 $\pm$ 10.2 & \best{73.7 $\pm$ \phantom{0}2.6} \\
Cornell   & 59.5 & 67.6 $\pm$ \phantom{0}2.7 & 65.4 $\pm$ \phantom{0}3.5 & \best{79.5 $\pm$ \phantom{0}6.5} \\
\bottomrule
\end{tabular}
\end{table}

SepAttack achieves the highest \emph{mean} misclassification on all three datasets, with margins of $+6.5$, $+7.8$, and $+11.9$\,pp over the best gradient baseline. A distinctive property of the closed-form construction is its low variance: SepAttack's standard deviation is $2.6$--$6.5$\,pp, compared with $10.2$--$22.3$\,pp for PGD-40 on Texas and Wisconsin and up to $22.3$\,pp for FGM on Texas. This is structural: the rank-one perturbation $\Delta\mathbf{X}^{\star}{=}\delta\,\mathbf{v}_1\mathbf{w}_1^\top$ is determined by graph structure and learned weights, not by gradient-based search, so it does not depend on initialization, step schedule, or restart luck. Per-seed, SepAttack beats both gradient baselines on $5/5$ seeds on Cornell; on Texas and Wisconsin it wins in mean while gradient methods exhibit large per-seed variability. We interpret this through Theorem~\ref{thm:separation}: on citation networks the spectrum concentrates near $\lambda_1{\approx}1$ and the graph factor $\sqrt{\tilde{d}_i}$ aligns the rank-one perturbation with the dominant propagation direction; on WebKB, $\lambda_1$ is more than double, so the rank-one geometry is a less complete summary of the attack surface --- yet the closed-form construction remains effective and is markedly more reliable than gradient baselines. Throughout these results the target model remains a 2-layer GCN; testing against heterophily-aware architectures (H2GCN~\citep{zhu2020beyond}, GPR-GNN~\citep{chien2021gprgnn}, LINKX~\citep{lim2021large}) is left to future work.

\subsection{Architecture-Independent Transfer}
\label{sec:transfer}

\begin{figure}[t]
    \centering
    \includegraphics[width=\textwidth]{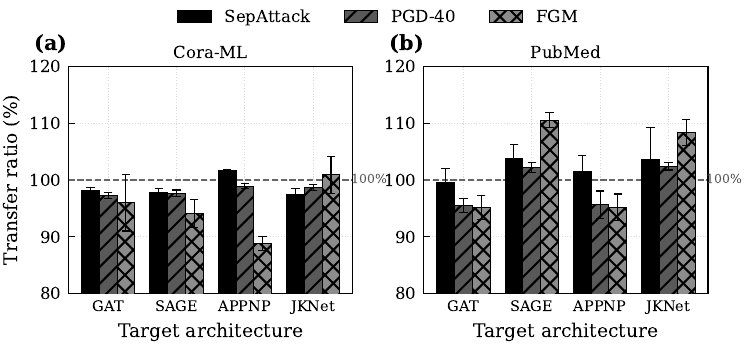}
    \caption{Cross-architecture transfer on continuous-feature graphs at $\delta=10$, 5 seeds. Bars: transfer ratio (\%) = accuracy drop on target / drop on source GCN; dashed line = perfect transfer. GCN source omitted (trivially $100\%$). Source drops (pp) used as denominators: Cora-ML: SepAttack=57.4, PGD=76.2, FGM=50.0; PubMed: SepAttack=43.7, PGD=64.1, FGM=47.8. All three attacks transfer within $6$\,pp of perfect efficiency, confirming the principle's architecture-invariance prediction when $\mathbf{w}_1$ is broadly shared. Binary-feature case (where $\mathbf{w}_1$ is not shared): Table~\ref{tab:binary_transfer}.}
    \label{fig:transfer}
\end{figure}

The separation principle decomposes $\Delta\mathbf{X}^*$ into a
graph factor $\tilde{\mathbf{D}}^{1/2}\mathbf{1}_N$
(architecture-independent by construction, since it depends only on
node degrees) and a weight factor $\mathbf{w}_1$ (from the source
GCN's weights). On continuous-feature graphs, where $\mathbf{w}_1$
is broadly shared across architectures trained on the same data,
the separation principle therefore predicts near-perfect transfer.
Figure~\ref{fig:transfer} tests this by crafting perturbations
against a source GCN and evaluating on GAT, GraphSAGE, APPNP, and
JKNet without adaptation.

On Cora-ML (Figure~\ref{fig:transfer}a), SepAttack transfer ratios
are 97.5--101.6\% (accuracy drops on non-source architectures
55.99--58.34\,pp versus 57.41\,pp on the source GCN). PubMed shows
the same pattern (99.5--103.8\%). PGD-40 and FGM transfer similarly
on both datasets (Figure~\ref{fig:transfer}), confirming that the
separation principle's architecture-invariance prediction holds for
all rank-one and gradient-based attacks when $\mathbf{w}_1$ is
shared across architectures.

On binary-feature graphs, $\mathbf{w}_1$ is not shared across architectures and SepAttack transfer drops to 21--69\% (Appendix~\ref{app:binary_transfer}, Table~\ref{tab:binary_transfer}); gradient-based attacks (PGD-40, FGM), which re-optimize per step, partially escape this limitation. Our separation principle offers an analytical analogue of the shared-subspace picture for GNNs documented in image classification~\cite{goodfellow2014explaining,tramer2017space}: the graph-dependent factor is architecture-invariant by construction, and transfer succeeds precisely to the extent that the weight-dependent factor $\mathbf{w}_1$ remains shared.

\subsection{SepAT: Adversarial Training Results}
\label{sec:sepat_results}

Table~\ref{tab:spectral_at} reports worst-case accuracy under best-sign SepAttack at $\delta=10$ for GCNs trained with and without SepAT, alongside each model's clean-accuracy baseline. SepAT recovers a substantial fraction of the accuracy lost to SepAttack on binary-feature graphs: worst-case accuracy improves from $16.0\%$ to $41.3\%$ on Cora ($+25.3$\,pp at $\delta=10$, average $+23.4$\,pp across $\delta\in\{2,4,6,8,10\}$) and from $15.4\%$ to $33.2\%$ on CiteSeer ($+17.8$\,pp at $\delta=10$, average $+15.1$\,pp). Clean accuracy is preserved within $0.5$ and $1.7$\,pp respectively.

\begin{table}[t]
\centering
\caption{SepAT (separation-principle adversarial training): worst-case accuracy (\%) under best-sign SepAttack at $\delta=10$. \textbf{Clean (no def.)}: test accuracy of the standard GCN without attack; \textbf{Clean (SepAT)}: test accuracy of the SepAT-regularized GCN without attack; \textbf{Worst-case}: $100 - \max(\mathrm{err}(+\sqrt{\tilde{d}},\mathbf{w}_1),\, \mathrm{err}(-\sqrt{\tilde{d}},\mathbf{w}_1))$ at $\delta = 10$. \textbf{Avg.\ gain}: mean reduction in worst-case error across $\delta \in \{2,4,6,8,10\}$, in percentage points. SepAT uses the bilateral output-stability regularizer of Section~\ref{sec:robust_training} ($\alpha{=}0.25$, $\delta_{\max}{=}10$, $T_w{=}50$, $\lambda{=}5\times10^{-4}$); Planetoid split~\cite{kipf2016semi}, 2-layer GCN with 16 hidden units; see Appendix~\ref{app:bilateral}.}
\label{tab:spectral_at}
\renewcommand{\arraystretch}{1.2}
\footnotesize
\begin{tabular}{@{}l c c c c c@{}}
\toprule
\textbf{Dataset} & \textbf{Clean (no def.)} & \textbf{Clean (SepAT)} & \textbf{Worst-case (no def.)} & \textbf{Worst-case (SepAT)} & \textbf{Avg.\ gain} \\
\midrule
  Cora     & 82.7 & 82.2 & 16.0 & \best{41.3} & $+23.4$\,pp \\
  CiteSeer & 69.9 & 68.2 & 15.4 & \best{33.2} & $+15.1$\,pp \\
  PubMed   & 78.1 & 75.9 & 40.6 & \best{42.6} & $+1.6$\,pp \\
\bottomrule
\end{tabular}
\end{table}

On PubMed the gain is small ($+1.6$\,pp average), in line with the ablation of Appendix~\ref{app:rank_one_validation}: on continuous-feature graphs, the weight factor $\mathbf{w}_1$ alone accounts for only ${\sim}25\%$ of the total damage from SepAttack, with the remaining ${\sim}75\%$ contributed by the multiplicative spatial amplification. SepAT regularizes the model output along the single $\mathbf{w}_1$ direction, addressing the $\mathbf{w}_1$ component but not the multiplicative spatial part; extending the defense to regularize along a broader range of spectral directions is a natural next step.


\section{Conclusion}
\label{sec:conclusion}

We presented a graph-theoretic framework that derives adversarial feature attacks on GNNs from the spectral properties of $\hat{\mathbf{A}}$. Theorem~\ref{thm:separation} establishes a \emph{separation principle}: the optimal feature perturbation under full access is rank-one and factorizes into a degree-proportional graph-spectral direction and an independent weight-spectral direction, yielding a gradient-free attack with analytical guarantees. Empirically, SepAttack matches or beats Nettack and IG-Attack with up to ${\sim}70\times$ speedup, transfers near-perfectly across architectures on continuous-feature graphs, and SepAT improves worst-case accuracy at $\delta=10$ by $+25.3$\,pp on Cora and $+17.8$\,pp on CiteSeer (Section~\ref{sec:sepat_results}); the smaller PubMed gain ($+1.6$\,pp average) is consistent with the ablation finding (Appendix~\ref{app:rank_one_validation}).

\paragraph{Limitations.} The separation principle is derived under linearized ReLU; restricted-access (Proposition~\ref{prop:restricted}) is heuristic, not provably optimal; transfer degrades to 21--69\% on binary-feature graphs (Appendix~\ref{app:binary_transfer}); the framework is restricted to feature perturbations, with edge perturbations and graph-transformer extensions left open; SepAT regularizes along $\pm\Delta\mathbf{X}^*$ only and is not evaluated against adaptive attacks targeting other spectral directions. Future directions include a nonlinear analysis, spectral robustness certificates, and edge-perturbation analogues.

\paragraph{Broader Impact.} This work reveals that GNNs in security-sensitive domains are vulnerable to gradient-free attacks that existing defenses fail to address. We characterize these attacks to enable principled defense design; SepAT and the open-problem framing of multi-subspace robustness (Section~\ref{sec:sepat_results}) are direct consequences. We acknowledge that the closed-form attack lowers the computational barrier for adversarial perturbation, and mitigate this by releasing the defense alongside the attack code.

\bibliographystyle{plainnat}
\bibliography{references}


\appendix

\section{Extended Related Work and Comparison Table}
\label{app:lit_comparison}

This appendix expands the brief Related Work of Section~\ref{sec:related_work} into a full categorization by attack strategy.

\paragraph{The two attack surfaces on a GNN.} Adversarial robustness studies how a learned model behaves when its input is corrupted by a small, adversarially chosen perturbation. For graph neural networks, the input has two components --- the graph $\mathbf{A}$ and the node features $\mathbf{X}$ --- and the literature has accordingly developed two complementary families of attacks: \emph{feature attacks}, which perturb $\mathbf{X}$~\cite{zugner2018adversarial,wu2019adversarial,xu2019topology}, and \emph{structure attacks}, which add, remove, or reweight edges in $\mathbf{A}$~\cite{zugner2018adversarial,xu2019topology,zugner2019adversarial,dai2018adversarial}. Each family further branches into evasion (test-time) and poisoning (training-time) variants.

\paragraph{Gradient-based attacks.} The dominant paradigm exploits gradient information to identify influential perturbations. Nettack~\cite{zugner2018adversarial} pioneered targeted attacks using greedy perturbation selection guided by surrogate model scores while preserving degree distributions. FGA~\cite{chen2018fast} adapted FGSM to graphs by selecting edge flips with maximum gradient magnitude. \citet{xu2019topology} introduced PGD attacks that relax binary adjacency constraints to continuous optimization. IG-Attack~\cite{wu2019adversarial} applies integrated gradients to score feature/edge perturbations. FGSM~\cite{goodfellow2014explaining} remains a single-step gradient baseline widely used in feature-attack settings. More recently, GraD~\cite{liu2023towards} addressed gradient bias in untargeted attacks, and InfAttack~\cite{wang2024efficient} leveraged influence functions for efficient black-box approximations. These methods fundamentally depend on model-specific gradient signals rather than intrinsic graph properties.

\paragraph{Meta-learning attacks.} For poisoning scenarios, Metattack~\cite{zugner2019adversarial} introduced meta-gradients to optimize perturbations as hyperparameters, achieving effective global degradation. Metacon~\cite{yoon2024debiased} addressed the bias toward training nodes through contrastive surrogate losses. While powerful, these methods remain computationally intensive and gradient-dependent.

\paragraph{Reinforcement learning attacks.} RL methods address black-box scenarios where gradient access is unavailable. RL-S2V~\cite{dai2018adversarial} modeled attacks as Markov Decision Processes with Structure2Vec embeddings, learning policies from prediction feedback alone. \citet{ma2019attacking} extended this to edge rewiring that preserves edge counts. G2A2C~\cite{ju2023let} introduced gradient-free node injection via actor-critic methods. These approaches learn attack strategies empirically but lack theoretical guarantees on perturbation effectiveness.

\paragraph{Node injection attacks.} Recognizing that modifying existing edges is often impractical, recent work has shifted toward node injection. NIPA~\cite{sun2020adversarial} formulated hierarchical RL for injection, while TDGIA~\cite{zou2021tdgia} introduced topologically-aware injection with smooth feature optimization. HAO~\cite{chen2022understanding} improved unnoticeability through homophily preservation. WTGIA~\cite{lei2024intruding} extended attacks to text-attributed graphs through word frequency analysis. These methods optimize injection heuristically without exploiting structural graph invariants.

\paragraph{Topological-heuristic attacks.} A limited body of work incorporates graph topological heuristics into attack design. GF-Attack~\cite{chang2020restricted} used graph signal processing for restricted black-box scenarios. VoteAttack~\cite{wen2024voteattack} derived a node voting matrix from graph topology for gradient-free attacks. GOttack~\cite{alom2025gottack} identified critical graph orbits (equivalence classes within graphlets) as attack targets. These approaches employ spectral information as auxiliary heuristics rather than as the foundational framework for attack construction.

\paragraph{Spectral perspectives on GNN robustness.} A spectral view of adversarial robustness on graphs is well-developed on the \emph{defense} side. GCN-SVD~\cite{entezari2020all} demonstrated empirically that a rank-$10$ truncated SVD of the adjacency matrix removes most of the damage caused by Nettack-style perturbations, indicating that adversarial perturbations concentrate in the high-rank components and therefore that clean graphs are spectrally low-rank. ProGNN~\cite{jin2020graph} made this explicit by jointly learning a cleaned adjacency via nuclear-norm, sparsity, and feature-smoothness regularization. \citet{chang2021not} strengthened the picture theoretically: using matrix perturbation theory, they prove that only spectral components of the normalized Laplacian whose eigenvalues fall within a specific robust interval are stable under structural perturbation, and derive the GCN-LFR defense that filters out the unstable modes. On the \emph{attack} side, the spectral line is sparser. \citet{bojchevski2019adversarial} pioneered the use of eigenvalue perturbation theory for adversarial attacks, but their analysis targets DeepWalk node embeddings rather than GCN classification and their attack is greedy rather than closed-form. SPAC~\cite{lin2022graph} introduced a structural attack that maximizes the $\ell_2$ distance between the eigenvalues of the clean and perturbed normalized Laplacians via iterative projected gradient descent. SAT~\cite{li2022spectral} decomposes the normalized adjacency spectrally and performs adversarial training by perturbing eigenvalues and eigenvectors via gradient ascent. In every case, the optimization remains iterative and the attack is constructed implicitly through a gradient procedure.

\paragraph{Certified robustness and adversarial training for GNNs.} A complementary line of work develops certified robustness guarantees for GNNs via randomized smoothing~\cite{bojchevski2019certifiable}. \citet{mujkanovic2022defenses} showed that many widely cited GNN defenses lose most of their apparent robustness under adaptive attacks. \citet{gosch2023adversarial} diagnose two fundamental pitfalls of adversarial training on GNNs---transductive evaluation bias and the limited capacity of standard GCN architectures for robust representations---and show that learnable spectral graph diffusion models are needed to improve upon the PGD-based adversarial training baseline. SepAT derives its training objective directly from the closed-form worst-case attack of Theorem~\ref{thm:separation}, complementing the PGD-based baseline of Gosch et al.

Table~\ref{tab:lit_comparison} extends the categorization to a broader set of attack methods, classifying each by strategy, knowledge assumption, perturbation type, attack stage, and theoretical basis.

\begin{table*}[t]
\centering
\caption{Comparison of adversarial attack methods on GNNs for node classification (Time Ascending). Full references in Section~\ref{sec:related_work}.}
\label{tab:lit_comparison}
\renewcommand{\arraystretch}{1.3}
\footnotesize
\begin{tabular}{@{}l c c c c c l@{}}
\toprule
\textbf{Method} & \textbf{Year} & \textbf{Strategy} & \textbf{Knowledge} & \textbf{Perturbation} & \textbf{Stage} & \textbf{Theoretical Basis} \\
\midrule
Nettack & 2018 & Gradient & Grey-box & Struct+Feat & Both & Surrogate scoring \\
\addlinespace[2pt]
FGA & 2018 & Gradient & White-box & Structure & Evasion & FGSM adaptation \\
\addlinespace[2pt]
RL-S2V & 2018 & RL & Black-box & Structure & Both & Learned policy \\
\addlinespace[2pt]
PGD Attack & 2019 & Gradient & White-box & Structure & Both & Continuous relaxation \\
\addlinespace[2pt]
Metattack & 2019 & Meta-learning & Grey-box & Structure & Poisoning & Meta-gradients \\
\addlinespace[2pt]
TDGIA & 2021 & Optimization & Grey-box & Injection & Evasion & Smooth optimization \\
\addlinespace[2pt]
HAO & 2022 & Optimization & Grey-box & Injection & Both & Homophily heuristic \\
\addlinespace[2pt]
GraD & 2022 & Gradient & White-box & Structure & Poisoning & Gradient debiasing \\
\addlinespace[2pt]
G2A2C & 2023 & RL & Black-box & Injection & Evasion & Actor-critic \\
\addlinespace[2pt]
VoteAttack & 2024 & Heuristic & Black-box & Struct+Feat & Evasion & Voting matrix \\
\addlinespace[2pt]
GOttack & 2025 & Topological & Grey-box & Structure & Evasion & Graph orbits \\
\midrule
\textbf{Ours} &  & \textbf{Graph-theoretic} & \textbf{White-box} & \textbf{Feature} & \textbf{Both} & \textbf{Spectral decomposition} \\
\bottomrule
\end{tabular}
\end{table*}

\section{Two-Layer GCN Specialization}
\label{app:two_layer}

\paragraph{Justification for the two-layer choice.} The choice of a two-layer GCN is not arbitrary but reflects a fundamental limitation of deep graph neural networks. \citet{li2018deeper} demonstrated that the GCN propagation rule is mathematically equivalent to Laplacian smoothing: each layer computes a weighted average of neighboring features, causing node representations within connected components to converge toward a common value proportional to $\sqrt{\tilde{d}_v}$. While moderate smoothing (1--2 layers) is beneficial for leveraging graph structure, excessive smoothing leads to \emph{over-smoothing}, where node features become indistinguishable, directly contradicting the discrimination required for node classification. \citet{oono2019graph} formalized this observation by proving that GNN outputs converge exponentially fast to an eigenspace of the graph Laplacian, with the convergence rate governed by the spectral properties of the normalized adjacency matrix and the singular values of the weight matrices. In this limiting eigenspace, node representations encode only connected component membership and degree information, which is insufficient for classification. Empirical studies consistently confirm that two-layer GCNs achieve optimal performance on standard benchmarks, with accuracy degrading significantly beyond three layers~\cite{kipf2016semi, li2018deeper}.

\subsection{Residual Analysis Framework}
\label{app:residual}

This subsection derives the closed form of the feature residual after linearizing the ReLU activations of~\eqref{eq:gcn_layerwise}, the surrogate construction adopted in Section~\ref{sec:feature_attack}. We replace each ReLU activation by the identity and trace the perturbation through the resulting linear $L$-layer GCN.

Consider an input feature matrix $\mathbf{X}$ and an adversarial perturbation $\Delta\mathbf{X}$ that produces the modified feature matrix $\mathbf{X}_a = \mathbf{X} + \Delta\mathbf{X}$. For the first layer of the linearized GCN:
\begin{equation}
\label{eq:app_first_layer}
    \mathbf{H}^{(1)} = \hat{\mathbf{A}}(\mathbf{X} + \Delta\mathbf{X})\mathbf{W}^{(0)} = \hat{\mathbf{A}}\mathbf{X}\mathbf{W}^{(0)} + \underbrace{\hat{\mathbf{A}}\Delta\mathbf{X}\mathbf{W}^{(0)}}_{\mathbf{R}_f^{(1)}(\Delta\mathbf{X})}.
\end{equation}
By induction over layers, the perturbation propagates through the linearized $L$-layer GCN as $\mathbf{R}_f^{(\ell+1)}(\Delta\mathbf{X}) = \hat{\mathbf{A}}\,\mathbf{R}_f^{(\ell)}(\Delta\mathbf{X})\,\mathbf{W}^{(\ell)}$ for $\ell = 1,\ldots,L-1$, which after telescoping yields the $L$-th layer feature residual:
\begin{equation}
\label{eq:app_feature_residual}
    \mathbf{R}_f^{(L)}(\Delta\mathbf{X}) = \hat{\mathbf{A}}^L\,\Delta\mathbf{X}\,\mathbf{W},
\end{equation}
where $\mathbf{W} = \mathbf{W}^{(0)}\mathbf{W}^{(1)}\cdots\mathbf{W}^{(L-1)} \in \mathbb{R}^{D\times C}$ is the composed weight matrix. The perturbation effect is propagated through the $L$-hop neighborhood structure encoded in $\hat{\mathbf{A}}^L$. For the two-layer GCN ($L=2$) used in our experiments, this specializes to $\mathbf{R}_f^{(2)}(\Delta\mathbf{X}) = \hat{\mathbf{A}}^2\Delta\mathbf{X}\,\mathbf{W}^{(0)}\mathbf{W}^{(1)}$, with $\hat{\mathbf{A}}^2 = \tilde{\mathbf{D}}^{-1/2}\tilde{\mathbf{A}}\tilde{\mathbf{D}}^{-1}\tilde{\mathbf{A}}\tilde{\mathbf{D}}^{-1/2}$ encoding two-hop propagation.

\subsection{Complete Optimization Formulation}
\label{app:optimization}

This appendix details the chain of reformulations that connects Problem~\ref{prob:nonlinear} (worst-case feature attack on a nonlinear GCN) to the surrogate problem~\eqref{eq:relaxed_problem} solved in Section~\ref{sec:feature_attack}: a ReLU linearization that simplifies the objective, and the equivalence $\|\Delta\mathbf{X}\|_F^2 = \|\Delta\mathbf{X}\|_0$ that justifies replacing the discrete feature-flip constraint by a Frobenius-norm budget on binary-feature graphs.

\paragraph{Detailed combinatorial form of Problem~\ref{prob:nonlinear}.}
The adversarial feature attack involves two coupled decisions: (1) how to allocate the perturbation budget across nodes, and (2) which features to perturb at each node. Making the per-node budget structure explicit, Problem~\ref{prob:nonlinear} is equivalent to
\begin{align}
\label{eq:app_feature_level}
\max_{\Delta\mathbf{X}} \quad & \bigl\|\mathbf{H}^{(L)}(\mathbf{X}+\Delta\mathbf{X}) - \mathbf{H}^{(L)}(\mathbf{X})\bigr\|_2 \\
\text{s.t.} \quad & \|\Delta\mathbf{X}_{u,:}\|_0 \leq \gamma_u \quad \forall u \in \mathcal{V}, \nonumber \\
& \Delta\mathbf{X} \in \{-1, 0, 1\}^{N \times D}, \nonumber \\
& \mathbf{X} + \Delta\mathbf{X} \in \{0,1\}^{N \times D}, \nonumber \\
& \sum_{u \in \mathcal{V}} \gamma_u \leq \Gamma_{\text{total}}, \nonumber
\end{align}
where $\gamma_u$ is the per-node perturbation budget (number of features that may be flipped at node $u$) and $\Gamma_{\text{total}}$ is the total perturbation budget. The objective is the nonlinear residual norm; the feasible set is combinatorial, so~\eqref{eq:app_feature_level} is intractable in this raw form.

\begin{remark}[Implicit Domain of $\Delta\mathbf{X}$]
\label{remark:implicit_domain}
The constraint $\mathbf{X} + \Delta\mathbf{X} \in \{0,1\}^{N \times D}$ combined with $\mathbf{X} \in \{0,1\}^{N \times D}$ implies $\Delta\mathbf{X} \in \{-1, 0, 1\}^{N \times D}$. Specifically, if $\mathbf{X}_{ij} = 0$, then $\Delta\mathbf{X}_{ij} \in \{0, 1\}$; if $\mathbf{X}_{ij} = 1$, then $\Delta\mathbf{X}_{ij} \in \{-1, 0\}$.
\end{remark}

\paragraph{Linearization of the ReLU.}
Replacing each ReLU in the GCN forward map~\eqref{eq:gcn_layerwise} by the identity, the residual~\eqref{eq:residual_def} reduces to the closed form $\mathbf{R}_f^{(L)}(\Delta\mathbf{X}) = \hat{\mathbf{A}}^L\,\Delta\mathbf{X}\,\mathbf{W}$ (Appendix~\ref{app:residual}), with $\mathbf{W}=\mathbf{W}^{(0)}\cdots\mathbf{W}^{(L-1)}$. Substituting this expression into Problem~\ref{prob:nonlinear} yields the \emph{linearized} mixed-integer feature-attack problem:
\begin{equation}
\label{eq:app_linearized_integer}
\begin{aligned}
\max_{\Delta \mathbf{X}} \quad & \|\hat{\mathbf{A}}^L \Delta \mathbf{X} \mathbf{W}\|_2 \\
\mathrm{s.t.} \quad & \|\Delta\mathbf{X}\|_0 \leq \Gamma_{\text{total}}, \\
& \mathbf{X} + \Delta\mathbf{X} \in \{0,1\}^{N \times D}.
\end{aligned}
\end{equation}
The objective is now closed-form in $\Delta\mathbf{X}$, but the feasible set remains combinatorial (a mixed-integer program).

\paragraph{From integer feature-flips to a Frobenius-norm budget.}
We replace the integer constraints of~\eqref{eq:app_linearized_integer} by a Frobenius-norm bound. By Remark~\ref{remark:implicit_domain}, every entry of any feasible $\Delta\mathbf{X}$ lies in $\{-1,0,1\}$, so each nonzero entry contributes exactly~$1$ to the squared Frobenius norm; summing,
\begin{equation}
\label{eq:app_frobenius_bound}
\|\Delta\mathbf{X}\|_F^2 \;=\; \|\Delta\mathbf{X}\|_0 \;\leq\; \Gamma_{\text{total}} \;\triangleq\; \gamma.
\end{equation}
Maximizing $\|\cdot\|_2$ over a compact set is equivalent to maximizing $\|\cdot\|_2^2$, and we obtain the relaxed problem~\eqref{eq:relaxed_problem} stated in the main text. This is a genuine relaxation along two axes: (i)~the ReLU is replaced by the identity, but the 1-Lipschitz property of ReLU ensures the linearized residual norm upper-bounds the nonlinear one; (ii)~every feasible point of~\eqref{eq:app_linearized_integer} is feasible for~\eqref{eq:relaxed_problem}, so the integer-flip feasible set is contained in the Frobenius ball. Together these imply that the optimal value of~\eqref{eq:relaxed_problem} provides an upper bound on those of~\eqref{eq:app_linearized_integer} and Problem~\ref{prob:nonlinear}. The relaxation trades discrete feature-flip constraints for a continuous budget, enabling the closed-form spectral analysis of Appendix~\ref{app:thm_proof}.

\paragraph{Greedy discretization of the rank-one optimum (restricted access).}
The continuous direction $\Delta\mathbf{X}^{\mathcal{A}} = \sqrt{\gamma}\,(\mathbf{b}_\mathcal{A}/\|\mathbf{b}_\mathcal{A}\|_2)\,\mathbf{w}_1^\top$ from Proposition~\ref{prop:restricted} also yields a feasible attack for the integer feature-flip Problem~\ref{prob:nonlinear}: the rank-one outer product $(\mathbf{b}_\mathcal{A}/\|\mathbf{b}_\mathcal{A}\|_2)\,\mathbf{w}_1^\top$ produces a per-coordinate score
\[
    s_{ij} \;\propto\; [\mathbf{b}_\mathcal{A}]_i \cdot [\mathbf{w}_1]_j,
\]
and applying greedy top-$\Gamma_{\text{total}}$ selection of \emph{feasible} flips (i.e., $0\to 1$ where $\mathbf{X}_{ij}=0$ and $1\to 0$ where $\mathbf{X}_{ij}=1$) according to $\mathrm{sign}(s_{ij})$ produces a discrete $\Delta\mathbf{X}\in\{-1,0,1\}^{N\times D}$ supported on $\mathcal{A}$ that satisfies all constraints of Problem~\ref{prob:nonlinear}.

\section{Proof of Theorem~\ref{thm:separation} (Separation Principle)}
\label{app:thm_proof}

This appendix gives the complete proof of Theorem~\ref{thm:separation}. Starting from the linear cost, continuous feature attack problem~\eqref{eq:relaxed_problem}, we (i)~rewrite the problem in the eigenbasis of $\hat{\mathbf{A}}^{2L}$, reducing it to an optimization over coefficient vectors $\{\boldsymbol{\epsilon}_i\}_{i=1}^N$; (ii)~solve that reformulated problem by a Weyl/Rayleigh sandwich argument, identifying the optimal coefficients as $\boldsymbol{\epsilon}_1^\star=\sqrt{\gamma}\,\mathbf{w}_1$, $\boldsymbol{\epsilon}_i^\star=\mathbf{0}$ ($i\geq 2$) and the optimal value as $\lambda_1\mu_1\gamma$; (iii)~identify the dominant eigenpair of $\hat{\mathbf{A}}^{2L}$ in closed form via the normalized Laplacian, yielding $\mu_1=1$ and $\mathbf{v}_1 = \tilde{\mathbf{D}}^{1/2}\mathbf{1}_N/\|\tilde{\mathbf{D}}^{1/2}\mathbf{1}_N\|_2$; and (iv)~translate the optimal coefficients back to the original perturbation space, yielding the rank-one separation~\eqref{eq:optimal_DX}.

The integer-to-Frobenius relaxation that produces~\eqref{eq:relaxed_problem} from Problem~\ref{prob:nonlinear} is given in Appendix~\ref{app:optimization}.

\subsection{Reformulation in the Spectral Basis}
\label{app:thm_reformulation}

\paragraph{Properties of $\hat{\mathbf{A}}^{2L}$.}
The matrix $\hat{\mathbf{A}}^{2L}$ that drives the analysis is symmetric and positive semi-definite. Symmetry follows from $\hat{\mathbf{A}}^T = \hat{\mathbf{A}}$; positive semi-definiteness follows from $\hat{\mathbf{A}}^{2L} = (\hat{\mathbf{A}}^L)^T (\hat{\mathbf{A}}^L)$. Consequently all eigenvalues of $\hat{\mathbf{A}}^{2L}$ satisfy $\mu_i \geq 0$, and the eigenvectors form an orthonormal basis of $\mathbb{R}^N$.

\paragraph{Eigendecompositions.}
Let $(\mu_1, \mathbf{v}_1), \ldots, (\mu_N, \mathbf{v}_N)$ be the eigenvalue--eigenvector pairs of $\hat{\mathbf{A}}^{2L}$, with $\mu_1 \geq \mu_2 \geq \cdots \geq \mu_N \geq 0$ and $\{\mathbf{v}_i\}$ orthonormal. Likewise let $(\lambda_1, \mathbf{w}_1), \ldots, (\lambda_D, \mathbf{w}_D)$ be the pairs of $\mathbf{W}\mathbf{W}^T \in \mathbb{R}^{D \times D}$, with $\lambda_1 \geq \lambda_2 \geq \cdots \geq \lambda_D \geq 0$ and $\{\mathbf{w}_j\}$ orthonormal.

\paragraph{Perturbation expansion.}
Since $\{\mathbf{v}_1, \ldots, \mathbf{v}_N\}$ is an orthonormal basis of $\mathbb{R}^N$, every $\Delta\mathbf{X} \in \mathbb{R}^{N \times D}$ admits a unique expansion
\begin{equation}
\label{eq:app_DX_expansion}
    \Delta\mathbf{X} \;=\; \sum_{i=1}^{N} \mathbf{v}_i\, \boldsymbol{\epsilon}_i^T,
    \qquad \boldsymbol{\epsilon}_i \in \mathbb{R}^D,
\end{equation}
where $\boldsymbol{\epsilon}_i$ collects the contribution of the $i$-th eigenmode to the perturbation.

\paragraph{Cost-function transformation.}
The spectral norm satisfies $\|\mathbf{C}\|_2 = \sqrt{\lambda_{\max}(\mathbf{C}^T\mathbf{C})}$, so maximizing $\|\hat{\mathbf{A}}^L\,\Delta\mathbf{X}\,\mathbf{W}\|_2^2$ is equivalent to maximizing $\lambda_{\max}(\mathbf{C}^T\mathbf{C})$ for $\mathbf{C} = \hat{\mathbf{A}}^L\,\Delta\mathbf{X}\,\mathbf{W}$. Using $(\hat{\mathbf{A}}^L)^T \hat{\mathbf{A}}^L = \hat{\mathbf{A}}^{2L}$, problem~\eqref{eq:relaxed_problem} becomes
\begin{equation}
\label{eq:app_opt_lmax}
\max_{\Delta\mathbf{X}}\; \lambda_{\max}\!\left(\mathbf{W}^T\,\Delta\mathbf{X}^T\,\hat{\mathbf{A}}^{2L}\,\Delta\mathbf{X}\,\mathbf{W}\right) \quad \text{s.t.} \quad \|\Delta\mathbf{X}\|_F^2 \leq \gamma.
\end{equation}
The objective matrix is symmetric and positive semi-definite for every $\Delta\mathbf{X}$, since it equals $\mathbf{B}^T \mathbf{B}$ with $\mathbf{B} = \hat{\mathbf{A}}^L\,\Delta\mathbf{X}\,\mathbf{W}$; this $\mathbf{W}^T(\cdot)\mathbf{W}$ form will be preserved throughout, so that Weyl's inequality and PSD monotonicity apply.

Substituting the expansion~\eqref{eq:app_DX_expansion} into~\eqref{eq:app_opt_lmax} and using the eigenvalue identity $\hat{\mathbf{A}}^{2L}\mathbf{v}_i = \mu_i \mathbf{v}_i$ together with orthonormality $\mathbf{v}_i^T\mathbf{v}_j = \delta_{ij}$:
\begin{align}
\mathbf{W}^T\,\Delta\mathbf{X}^T\,\hat{\mathbf{A}}^{2L}\,\Delta\mathbf{X}\,\mathbf{W}
&= \mathbf{W}^T \!\left(\sum_{j=1}^{N} \boldsymbol{\epsilon}_j \mathbf{v}_j^T\right) \!\!\left(\sum_{i=1}^{N} \mu_i\, \mathbf{v}_i \boldsymbol{\epsilon}_i^T\right) \!\mathbf{W} \nonumber \\
&= \mathbf{W}^T \!\left(\sum_{i,j} \mu_i\, \boldsymbol{\epsilon}_j (\mathbf{v}_j^T \mathbf{v}_i) \boldsymbol{\epsilon}_i^T\right) \!\mathbf{W} \nonumber \\
&= \mathbf{W}^T \!\left(\sum_{i=1}^{N} \mu_i\, \boldsymbol{\epsilon}_i \boldsymbol{\epsilon}_i^T\right) \!\mathbf{W} \nonumber \\
&= \sum_{i=1}^{N} \mu_i\, \mathbf{W}^T \boldsymbol{\epsilon}_i \boldsymbol{\epsilon}_i^T \mathbf{W}.
\label{eq:app_cost_final}
\end{align}
Each summand $\mu_i\,\mathbf{W}^T\boldsymbol{\epsilon}_i\boldsymbol{\epsilon}_i^T\mathbf{W}$ is symmetric and positive semi-definite (since $\mu_i \geq 0$ and $\mathbf{W}^T\boldsymbol{\epsilon}_i\boldsymbol{\epsilon}_i^T\mathbf{W} = (\boldsymbol{\epsilon}_i^T\mathbf{W})^T(\boldsymbol{\epsilon}_i^T\mathbf{W}) \succeq \mathbf{0}$).

\paragraph{Constraint simplification.}
By orthonormality of $\{\mathbf{v}_i\}$,
\begin{align}
\|\Delta\mathbf{X}\|_F^2
&= \operatorname{trace}(\Delta\mathbf{X}^T \Delta\mathbf{X}) \nonumber\\
&= \operatorname{trace}\!\biggl(\sum_{i,j}\boldsymbol{\epsilon}_j(\mathbf{v}_j^T\mathbf{v}_i)\boldsymbol{\epsilon}_i^T\biggr)
= \sum_{i=1}^{N}\|\boldsymbol{\epsilon}_i\|_2^2,
\label{eq:app_constraint_transformed}
\end{align}
so the budget constraint~\eqref{eq:relaxed_problem} reads $\sum_i\|\boldsymbol{\epsilon}_i\|_2^2 \leq \gamma$.

\paragraph{Reformulated problem.}
Combining~\eqref{eq:app_cost_final} and~\eqref{eq:app_constraint_transformed}, problem~\eqref{eq:relaxed_problem} is equivalent to
\begin{equation}
\label{eq:app_reformulated_final}
\begin{aligned}
    &\max_{\{\boldsymbol{\epsilon}_i\}_{i=1}^N} \quad \lambda_{\max}\!\left(\sum_{i=1}^{N} \mu_i\, \mathbf{W}^T \boldsymbol{\epsilon}_i \boldsymbol{\epsilon}_i^T \mathbf{W}\right) \\[6pt]
    &\text{s.t.}\quad \text{(C1):}\; \sum_{i=1}^{N}\|\boldsymbol{\epsilon}_i\|_2^2 \;\leq\; \gamma,
\end{aligned}
\end{equation}
which is the same as~\eqref{eq:reformulated_final} in the main text. Optimizing over $\Delta\mathbf{X}$ has been replaced by optimizing over the coefficient vectors $\{\boldsymbol{\epsilon}_i\}_{i=1}^N$.

\subsection{Optimal Spectral Coefficients}
\label{app:thm_lemma}

We solve the reformulated problem~\eqref{eq:app_reformulated_final} via a sandwich argument that combines Weyl's inequality from above with a single-summand restriction from below.

\begin{lemma}[Optimal Spectral Coefficients]
\label{lem:optimal}
Let $(\mu_1,\mathbf{v}_1)$ be the dominant eigenpair of $\hat{\mathbf{A}}^{2L}$ and $(\lambda_1,\mathbf{w}_1)$ the dominant eigenpair of $\mathbf{W}\mathbf{W}^T$, with $\|\mathbf{v}_1\|_2 = \|\mathbf{w}_1\|_2 = 1$. An optimal solution to~\eqref{eq:app_reformulated_final} is
\begin{equation}
\label{eq:app_optimal_coeffs}
    \boldsymbol{\epsilon}_1^\star = \sqrt{\gamma}\cdot\mathbf{w}_1,
    \qquad \boldsymbol{\epsilon}_2^\star = \cdots = \boldsymbol{\epsilon}_N^\star = \mathbf{0},
\end{equation}
achieving the optimal value $\lambda_1\mu_1\gamma$.
\end{lemma}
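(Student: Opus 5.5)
The plan is a two-sided sandwich: an upper bound valid for every feasible family $\{\boldsymbol{\epsilon}_i\}$, and a matching lower bound attained by the candidate \eqref{eq:app_optimal_coeffs}.

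\textbf{Upper bound.} Write $\mathbf{M}(\{\boldsymbol{\epsilon}_i\}) = \sum_{i=1}^N \mu_i\,\mathbf{W}^T\boldsymbol{\epsilon}_i\boldsymbol{\epsilon}_i^T\mathbf{W}$. Each summand is positive semi-definite by the remark after \eqref{eq:app_cost_final}. By Weyl's inequality (subadditivity of $\lambda_{\max}$ over Hermitian matrices),
\[
\lambda_{\max}(\mathbf{M}) \le \sum_{i=1}^N \mu_i\,\lambda_{\max}\bigl(\mathbf{W}^T\boldsymbol{\epsilon}_i\boldsymbol{\epsilon}_i^T\mathbf{W}\bigr).
\]
Each term in this sum is rank one, so
\[
\lambda_{\max}\bigl(\mathbf{W}^T\boldsymbol{\epsilon}_i\boldsymbol{\epsilon}_i^T\mathbf{W}\bigr) = \|\mathbf{W}^T\boldsymbol{\epsilon}_i\|_2^2 = \boldsymbol{\epsilon}_i^T\mathbf{W}\mathbf{W}^T\boldsymbol{\epsilon}_i .
\]
By the Rayleigh quotient bound for $\mathbf{W}\mathbf{W}^T$, this is at most $\lambda_1\|\boldsymbol{\epsilon}_i\|_2^2$. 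Using $\mu_i\le\mu_1$ (all $\mu_i\ge 0$) and constraint (C1), we get
\[
\lambda_{\max}(\mathbf{M}) \le \lambda_1\mu_1\sum_i\|\boldsymbol{\epsilon}_i\|_2^2 \le \lambda_1\mu_1\gamma .
\]
One could avoid Weyl entirely by observing that $\lambda_{\max}(\mathbf{M})=\max_{\|\mathbf{u}\|=1}\sum_i\mu_i(\boldsymbol{\epsilon}_i^T\mathbf{W}\mathbf{u})^2$ and then applying Cauchy--Schwarz term by term. I would mention this as an equivalent route, but follow the Weyl/Rayleigh path announced in the paper.

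\textbf{Lower bound (achievability).} Plug in $\boldsymbol{\epsilon}_1^\star=\sqrt{\gamma}\,\mathbf{w}_1$ and $\boldsymbol{\epsilon}_i^\star=\mathbf{0}$ for $i\ge2$. This point is feasible, since $\|\boldsymbol{\epsilon}_1^\star\|_2^2=\gamma$ because $\|\mathbf{w}_1\|_2=1$. The objective collapses to the single rank-one matrix $\gamma\mu_1\mathbf{W}^T\mathbf{w}_1\mathbf{w}_1^T\mathbf{W}$, whose largest eigenvalue is
\[
\gamma\mu_1\|\mathbf{W}^T\mathbf{w}_1\|_2^2=\gamma\mu_1\,\mathbf{w}_1^T\mathbf{W}\mathbf{W}^T\mathbf{w}_1=\gamma\mu_1\lambda_1 .
\]
Upper and lower bounds coincide, so the candidate is optimal with value $\lambda_1\mu_1\gamma$.

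\textbf{Main obstacle.} There is no deep difficulty here. The main point of care is justifying the rank-one identity $\lambda_{\max}(\mathbf{W}^T\boldsymbol{\epsilon}\boldsymbol{\epsilon}^T\mathbf{W})=\|\mathbf{W}^T\boldsymbol{\epsilon}\|_2^2$ cleanly. I would do this by noting that the matrix equals $\mathbf{a}\mathbf{a}^T$ with $\mathbf{a}=\mathbf{W}^T\boldsymbol{\epsilon}$, whose only nonzero eigenvalue is $\|\mathbf{a}\|_2^2$, with eigenvector $\mathbf{a}$. A second point is making explicit that nonnegativity of the $\mu_i$ is what permits the replacement $\mu_i\le\mu_1$ inside a sum of PSD terms. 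I would also remark that the optimizer need not be unique: any unit vector in the top eigenspaces, or a spread over ties $\mu_i=\mu_1$, also attains the bound. This is consistent with the lemma's wording ``an optimal solution.''
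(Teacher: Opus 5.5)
Your proposal is correct and follows essentially the same route as the paper: a Weyl upper bound, then the rank-one identity $\lambda_{\max}(\mathbf{W}^T\boldsymbol{\epsilon}_i\boldsymbol{\epsilon}_i^T\mathbf{W})=\|\mathbf{W}^T\boldsymbol{\epsilon}_i\|_2^2$, the Rayleigh bound for $\mathbf{W}\mathbf{W}^T$, $\mu_i\le\mu_1$, and the budget, matched by evaluating the objective at the candidate point. The only difference is cosmetic: the paper also records the general lower bound $\mu_1\|\mathbf{W}^T\boldsymbol{\epsilon}_1\|_2^2\le\lambda_{\max}(\cdot)$, obtained by dropping PSD summands, before specializing, and your direct achievability check makes that step unnecessary.
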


\begin{proof}
We establish $\lambda_1\mu_1\gamma$ as both an upper bound and an attained value.

\medskip
\noindent\textbf{Step 1: Sandwich chain inequality.}
Let $\boldsymbol{\epsilon}_1,\ldots,\boldsymbol{\epsilon}_N$ be any feasible point, i.e., satisfying the budget $\sum_i\|\boldsymbol{\epsilon}_i\|_2^2 \leq \gamma$ (constraint~C1).

\emph{Lower bound.} Each summand $\mu_i\,\mathbf{W}^T\boldsymbol{\epsilon}_i\boldsymbol{\epsilon}_i^T\mathbf{W}$ is symmetric and positive semi-definite, so dropping all summands except $i=1$ can only decrease $\lambda_{\max}$:
\begin{equation}
\label{eq:app_lower_bound}
\lambda_{\max}\!\left(\mu_1\,\mathbf{W}^T\boldsymbol{\epsilon}_1\boldsymbol{\epsilon}_1^T\mathbf{W}\right) \;\leq\; \lambda_{\max}\!\left(\sum_{i=1}^{N} \mu_i\,\mathbf{W}^T\boldsymbol{\epsilon}_i\boldsymbol{\epsilon}_i^T\mathbf{W}\right).
\end{equation}

\emph{Upper bound (Weyl's inequality).} For symmetric PSD matrices, Weyl's inequality gives
\begin{equation}
\label{eq:app_weyl_upper}
\lambda_{\max}\!\left(\sum_{i=1}^{N} \mu_i\,\mathbf{W}^T\boldsymbol{\epsilon}_i\boldsymbol{\epsilon}_i^T\mathbf{W}\right) \;\leq\; \sum_{i=1}^{N} \mu_i\,\lambda_{\max}\!\left(\mathbf{W}^T\boldsymbol{\epsilon}_i\boldsymbol{\epsilon}_i^T\mathbf{W}\right).
\end{equation}

For each $i$, $\mathbf{W}^T\boldsymbol{\epsilon}_i\boldsymbol{\epsilon}_i^T\mathbf{W}$ is symmetric of rank at most one, so its unique nonzero eigenvalue equals its trace:
\begin{equation}
\lambda_{\max}\!\left(\mathbf{W}^T\boldsymbol{\epsilon}_i\boldsymbol{\epsilon}_i^T\mathbf{W}\right) = \boldsymbol{\epsilon}_i^T\mathbf{W}\mathbf{W}^T\boldsymbol{\epsilon}_i = \|\mathbf{W}^T\boldsymbol{\epsilon}_i\|_2^2.
\end{equation}
Combining,
\begin{equation}
\label{eq:app_full_chain}
\mu_1\,\|\mathbf{W}^T\boldsymbol{\epsilon}_1\|_2^2 \;\leq\; \lambda_{\max}\!\left(\sum_{i=1}^{N}\mu_i\,\mathbf{W}^T\boldsymbol{\epsilon}_i\boldsymbol{\epsilon}_i^T\mathbf{W}\right) \;\leq\; \sum_{i=1}^{N}\mu_i\,\|\mathbf{W}^T\boldsymbol{\epsilon}_i\|_2^2.
\end{equation}

\medskip
\noindent\textbf{Step 2: Maximize the lower bound over C1.}
The Rayleigh quotient $\boldsymbol{\epsilon}_1^T\mathbf{W}\mathbf{W}^T\boldsymbol{\epsilon}_1 / \|\boldsymbol{\epsilon}_1\|_2^2$ is maximized at $\boldsymbol{\epsilon}_1 \propto \mathbf{w}_1$ with value $\lambda_1$. Hence, putting the entire budget on the first coefficient ($\boldsymbol{\epsilon}_1 = \sqrt{\gamma}\,\mathbf{w}_1$, $\boldsymbol{\epsilon}_i = \mathbf{0}$ for $i\geq 2$),
\begin{equation}
\label{eq:app_lower_max}
\max_{\boldsymbol{\epsilon}_i \in \text{C1}} \mu_1\,\|\mathbf{W}^T\boldsymbol{\epsilon}_1\|_2^2 \;=\; \mu_1\cdot\lambda_1\gamma.
\end{equation}

\medskip
\noindent\textbf{Step 3: Bound the upper bound over C1.}
For each $i$, the Rayleigh bound gives $\|\mathbf{W}^T\boldsymbol{\epsilon}_i\|_2^2 = \boldsymbol{\epsilon}_i^T\mathbf{W}\mathbf{W}^T\boldsymbol{\epsilon}_i \leq \lambda_1\,\|\boldsymbol{\epsilon}_i\|_2^2$. Combined with $\mu_i \leq \mu_1$ and the budget,
\begin{equation}
\label{eq:app_rhs_chain}
\sum_{i=1}^{N}\mu_i\,\|\mathbf{W}^T\boldsymbol{\epsilon}_i\|_2^2 \;\leq\; \lambda_1\sum_{i=1}^{N}\mu_i\,\|\boldsymbol{\epsilon}_i\|_2^2 \;\leq\; \lambda_1\mu_1\sum_{i=1}^{N}\|\boldsymbol{\epsilon}_i\|_2^2 \;\leq\; \lambda_1\mu_1\gamma.
\end{equation}

\medskip
\noindent\textbf{Step 4: Sandwich.}
Substituting~\eqref{eq:app_lower_max} into the maximum of the chain~\eqref{eq:app_full_chain} from below and~\eqref{eq:app_rhs_chain} from above:
\begin{equation}
\lambda_1\mu_1\gamma \;\leq\; \max_{\boldsymbol{\epsilon}_i \in \text{C1}} \lambda_{\max}\!\left(\sum_{i=1}^{N}\mu_i\,\mathbf{W}^T\boldsymbol{\epsilon}_i\boldsymbol{\epsilon}_i^T\mathbf{W}\right) \;\leq\; \lambda_1\mu_1\gamma.
\end{equation}
Therefore the optimal value of~\eqref{eq:app_reformulated_final} is exactly $\lambda_1\mu_1\gamma$.

\medskip
\noindent\textbf{Step 5: Achievability.}
The candidate~\eqref{eq:app_optimal_coeffs} is feasible:
\(\sum_{i=1}^{N}\|\boldsymbol{\epsilon}_i^\star\|_2^2 = \|\boldsymbol{\epsilon}_1^\star\|_2^2 = \gamma\,\|\mathbf{w}_1\|_2^2 = \gamma\).
And it attains the bound:
\begin{align}
\lambda_{\max}\!\left(\sum_{i=1}^N \mu_i\,\mathbf{W}^T\boldsymbol{\epsilon}_i^\star(\boldsymbol{\epsilon}_i^\star)^T\mathbf{W}\right)
&= \lambda_{\max}\!\left(\mu_1\,\mathbf{W}^T\boldsymbol{\epsilon}_1^\star(\boldsymbol{\epsilon}_1^\star)^T\mathbf{W}\right) \nonumber\\
&= \mu_1\,\|\mathbf{W}^T\boldsymbol{\epsilon}_1^\star\|_2^2 \nonumber\\
&= \mu_1 \cdot \gamma\,\mathbf{w}_1^T\mathbf{W}\mathbf{W}^T\mathbf{w}_1 \nonumber\\
&= \mu_1\lambda_1\gamma.
\end{align}
\end{proof}

\paragraph{Geometric remark.}
The optimum collapses onto a single eigenmode rather than spreading across several. By the Weyl PSD inequality $\lambda_{\max}(\sum_i A_i) \leq \sum_i \lambda_{\max}(A_i)$, distributing budget across multiple $\boldsymbol{\epsilon}_i$ can only weaken the objective relative to concentrating the budget into the top spectral mode. Equality in this inequality requires the $A_i$ to share a common top eigenvector, which (combined with $\mu_1 \geq \mu_i$) forces concentration on $i=1$.

\subsection{Closed Form of the Dominant Eigenpair via the Normalized Laplacian}
\label{app:dominant_eigenpair}

It remains to identify the dominant eigenpair $(\mu_1,\mathbf{v}_1)$ of $\hat{\mathbf{A}}^{2L}$ in closed form.

\paragraph{Spectrum of $\hat{\mathbf{A}}$.}
Since $\hat{\mathbf{A}} = \tilde{\mathbf{D}}^{-1/2}\tilde{\mathbf{A}}\tilde{\mathbf{D}}^{-1/2}$ is the symmetric normalized adjacency of the graph with self-loops, we may write $\hat{\mathbf{A}} = \mathbf{I} - \hat{\mathbf{L}}$ where $\hat{\mathbf{L}}$ is the symmetric normalized Laplacian, whose eigenvalues lie in $[0,2]$ and whose smallest eigenvalue is $0$. We assume the graph is connected; for the citation networks we use, we follow the standard convention of restricting to the largest connected component when relevant. Under this assumption the zero eigenvalue of $\hat{\mathbf{L}}$ is simple. Adding self-loops makes $\hat{\mathbf{A}}$ aperiodic, so the eigenvalue $-1$ of $\hat{\mathbf{A}}$ (which would otherwise appear for bipartite graphs) is excluded; consequently the largest eigenvalue $1$ of $\hat{\mathbf{A}}$ is the unique eigenvalue of maximum modulus.

\paragraph{Dominant eigenpair of $\hat{\mathbf{A}}$.}
The smallest eigenvalue of $\hat{\mathbf{L}}$ is $0$, with eigenvector $\mathbf{u}_0 = \tilde{\mathbf{D}}^{1/2}\mathbf{1}_N / \|\tilde{\mathbf{D}}^{1/2}\mathbf{1}_N\|_2$:
\begin{align}
\hat{\mathbf{L}}\,\tilde{\mathbf{D}}^{1/2}\mathbf{1}_N
&= (\mathbf{I} - \tilde{\mathbf{D}}^{-1/2}\tilde{\mathbf{A}}\tilde{\mathbf{D}}^{-1/2})\,\tilde{\mathbf{D}}^{1/2}\mathbf{1}_N \nonumber\\
&= \tilde{\mathbf{D}}^{1/2}\mathbf{1}_N - \tilde{\mathbf{D}}^{-1/2}\tilde{\mathbf{A}}\mathbf{1}_N \nonumber\\
&= \tilde{\mathbf{D}}^{1/2}\mathbf{1}_N - \tilde{\mathbf{D}}^{-1/2}\tilde{\mathbf{D}}\mathbf{1}_N \nonumber\\
&= \tilde{\mathbf{D}}^{1/2}\mathbf{1}_N - \tilde{\mathbf{D}}^{1/2}\mathbf{1}_N = \mathbf{0},
\end{align}
where we used $\tilde{\mathbf{A}}\mathbf{1}_N = \tilde{\mathbf{D}}\mathbf{1}_N$ (each row of $\tilde{\mathbf{A}}$ sums to the corresponding entry of $\tilde{\mathbf{D}}$). Consequently the largest eigenvalue of $\hat{\mathbf{A}} = \mathbf{I} - \hat{\mathbf{L}}$ is $1$, attained at the same eigenvector $\mathbf{u}_0$.

\paragraph{Dominant eigenpair of $\hat{\mathbf{A}}^{2L}$.}
Powers of a symmetric matrix share its eigenvectors: if $\hat{\mathbf{A}}\mathbf{u}_i = \eta_i\,\mathbf{u}_i$, then $\hat{\mathbf{A}}^{2L}\mathbf{u}_i = \eta_i^{2L}\,\mathbf{u}_i$. Since $\eta_i \in [-1,1]$ with maximum absolute value $1$ attained at $\eta_0 = 1$, and since $2L$ is even, we have $\eta_i^{2L} \in [0,1]$ with maximum $1$. Therefore
\begin{equation}
\label{eq:app_top_eigenpair}
\mu_1 \;=\; 1, \qquad \mathbf{v}_1 \;=\; \mathbf{u}_0 \;=\; \frac{\tilde{\mathbf{D}}^{1/2}\mathbf{1}_N}{\|\tilde{\mathbf{D}}^{1/2}\mathbf{1}_N\|_2}.
\end{equation}
In particular, the $i$-th entry of $\mathbf{v}_1$ is proportional to $\sqrt{\tilde{d}_i}$, and $\mu_1 = 1$ is independent of the depth $L$ --- the dominant eigenpair is the same for $\hat{\mathbf{A}}$, $\hat{\mathbf{A}}^2$, and any other power. The optimal value $\lambda_1\mu_1\gamma$ from Lemma~\ref{lem:optimal} therefore reduces to $\lambda_1\gamma$.

\subsection{Translation Back: Closed-Form Optimal Perturbation}
\label{app:thm_translation}

We can now complete the proof of Theorem~\ref{thm:separation}. Using the optimal coefficients $\boldsymbol{\epsilon}_1^\star = \sqrt{\gamma}\,\mathbf{w}_1$, $\boldsymbol{\epsilon}_i^\star = \mathbf{0}$ ($i\geq 2$) of Lemma~\ref{lem:optimal} and the eigenbasis expansion~\eqref{eq:app_DX_expansion},
\begin{equation}
\label{eq:app_optimal_DX_v1}
\Delta\mathbf{X}^\star \;=\; \sum_{i=1}^N \mathbf{v}_i\,(\boldsymbol{\epsilon}_i^\star)^T \;=\; \mathbf{v}_1\,(\boldsymbol{\epsilon}_1^\star)^T \;=\; \sqrt{\gamma}\,\mathbf{v}_1\,\mathbf{w}_1^T.
\end{equation}
Substituting the closed form $\mathbf{v}_1 = \tilde{\mathbf{D}}^{1/2}\mathbf{1}_N/\|\tilde{\mathbf{D}}^{1/2}\mathbf{1}_N\|_2$ from~\eqref{eq:app_top_eigenpair},
\begin{equation}
\Delta\mathbf{X}^\star \;=\; \sqrt{\gamma}\,\frac{\tilde{\mathbf{D}}^{1/2}\mathbf{1}_N}{\|\tilde{\mathbf{D}}^{1/2}\mathbf{1}_N\|_2}\,\mathbf{w}_1^T,
\end{equation}
which is exactly~\eqref{eq:optimal_DX} in the body. The per-node perturbation magnitude is $\|\Delta\mathbf{X}^\star_{i,:}\|_2 \propto \sqrt{\tilde{d}_i}$, so the optimal attack places budget on each node in proportion to the square root of its (self-loop-augmented) degree. This proves Theorem~\ref{thm:separation}.

\paragraph{Rank-one structure and role of the two factors.}
The expression $\Delta\mathbf{X}^\star = \sqrt{\gamma}\,\mathbf{v}_1\,\mathbf{w}_1^T$ has rank exactly one. The graph-dependent factor $\mathbf{v}_1 = \tilde{\mathbf{D}}^{1/2}\mathbf{1}_N/\|\cdot\|$ determines \emph{which nodes to perturb}: the $i$-th entry, proportional to $\sqrt{\tilde{d}_i}$, controls how much of the $\sqrt{\gamma}$ budget is placed on node $i$, and depends only on the graph degrees. The weight-dependent factor $\mathbf{w}_1$ determines \emph{which feature direction to perturb along}: the $j$-th entry controls how much of the per-node perturbation lies along feature $j$, and depends only on the learned weights through the top right singular vector of $\mathbf{W} = \mathbf{W}^{(0)}\cdots\mathbf{W}^{(L-1)}$. The two factors are computed independently: $\mathbf{v}_1$ from $\tilde{\mathbf{D}}$ alone (graph), $\mathbf{w}_1$ from $\mathbf{W}\mathbf{W}^T$ alone (weights).

\subsection{Extended Discussion of Theorem~\ref{thm:separation}}
\label{app:thm_discussion}

This subsection collects the extended discussion of Theorem~\ref{thm:separation} that was abridged in the body for space. It comprises two parts: the three constructive consequences of the separation principle (Appendix~\ref{app:thm_discussion_principle}), and the computational comparison between SepAttack and gradient-based baselines (Appendix~\ref{app:thm_discussion_compute}).

\subsubsection{Separation principle: the three consequences}
\label{app:thm_discussion_principle}

The rank-one structure of \eqref{eq:optimal_DX} exposes the \emph{separation principle}: $\tilde{\mathbf{D}}^{1/2}\mathbf{1}_N$ determines node importance (per-node magnitude $\propto\sqrt{\tilde{d}_i}$), while $\mathbf{w}_1$ determines feature importance, and the two factors are computed independently. This factorization has three consequences. (i)~The optimum admits a closed-form expression computed from a single eigendecomposition; in particular, problem~\eqref{eq:relaxed_problem} need not be solved by a gradient-based procedure as in PGD~\cite{xu2019topology}, IG-Attack~\cite{wu2019adversarial}, or Nettack~\cite{zugner2018adversarial}. (ii)~The optimal value $\lambda_1\gamma$ depends on the weight spectrum only through $\lambda_1$, since $\mu_1=1$ is fixed by the normalization of $\hat{\mathbf{A}}$. (iii)~The graph-dependent factor depends only on $\tilde{\mathbf{D}}$, which is shared across architectures trained on the same graph; consequently the perturbation transfers to a different architecture to the extent that its top weight singular vector $\mathbf{w}_1$ is aligned with the source GCN's. A further immediate consequence is a theoretical perspective on the empirical degree bias reported by GraD~\cite{liu2023towards}, since the principle shows the degree-proportional allocation arises naturally from the optimum rather than from gradient-estimation artifacts.

\subsubsection{Computational aspects}
\label{app:thm_discussion_compute}

A direct consequence of the closed-form characterization is that crafting the optimal perturbation is dramatically cheaper than for gradient-based attacks. Computing $\mathbf{w}_1$ amounts to a single eigendecomposition of $\mathbf{W}\mathbf{W}^T \in \mathbb{R}^{D\times D}$ (or, equivalently, the top-singular-vector computation on $\mathbf{W}$), which is $O(DC)$ per power-iteration step and converges in a handful of steps in practice; the graph-dependent factor $\tilde{\mathbf{D}}^{1/2}\mathbf{1}_N$ is a constant of the graph and is computed once. By contrast, gradient-based baselines repeatedly call the GCN's forward and backward passes: PGD-40 requires $40$ forward/backward passes per attack; IG-Attack additionally averages gradients over $K$ integration steps, costing $O(K)$ extra forward/backward passes; Nettack performs greedy combinatorial search over candidate edges or features. Empirically (Section~\ref{sec:targeted_eff}, Table~\ref{tab:targeted}), this translates to ${\sim}5$\,ms per target for SepAttack versus ${\sim}370$\,ms for Nettack and ${\sim}154$\,s for IG-Attack on Cora --- a $70\times$ speedup over Nettack and a four-orders-of-magnitude reduction relative to IG-Attack.

\section{Targeted Rank-One Optimum}
\label{app:targeted_proof}

This appendix states and proves the targeted specialization of the separation principle. Unlike Theorem~\ref{thm:separation}, which maximizes the global residual norm $\|\hat{\mathbf{A}}^L\Delta\mathbf{X}\,\mathbf{W}\|_2$, here the attacker focuses on a single target node $t$ and maximizes a per-target logit-gap objective. This is the objective adopted by Nettack~\cite{zugner2018adversarial} (their Eq.~14) and IG-Attack~\cite{wu2019adversarial}, and we develop the closed-form rank-one optimum for it so that Section~\ref{sec:targeted_eff} can compare against those baselines on their own objective.

\paragraph{Connection to Nettack's surrogate loss.}
After linearization, Nettack's surrogate loss $\mathcal{L}_s(A,X;W,v_0)$ for target $v_0$ with true class $c_{\text{old}}$ and runner-up class $c$ reduces (their Eq.~14, $L=2$) to
\[
    [\hat{\mathbf{A}}^2 \mathbf{X}\mathbf{W}]_{v_0,c} - [\hat{\mathbf{A}}^2 \mathbf{X}\mathbf{W}]_{v_0,c_{\text{old}}}.
\]
For an additive feature perturbation $\Delta\mathbf{X}$, the change in this loss is precisely $[\hat{\mathbf{A}}^2 \Delta\mathbf{X}\,\mathbf{W}]_{v_0,c}-[\hat{\mathbf{A}}^2 \Delta\mathbf{X}\,\mathbf{W}]_{v_0,c_{\text{old}}}$, which is the linearized targeted objective of Proposition~\ref{prop:targeted} below for $L=2$. Nettack maximizes this objective by greedy combinatorial search; Proposition~\ref{prop:targeted} solves the same objective in closed form under a rank-one Frobenius-norm constraint, and the integer-flip attack we deploy in Table~\ref{tab:targeted} is the greedy discretization of that closed-form rank-one optimum.

\begin{proposition}[Targeted Rank-One Optimum]
\label{prop:targeted}
Fix a target node $t$ with true class $c^\ast$ and runner-up class $c'$, and an attacker set $\mathcal{A}\subseteq\mathcal{V}$. Consider the linearized targeted objective
\begin{equation}
\label{eq:app_targeted_obj}
    \phi(\Delta\mathbf{X}) \;:=\; \bigl[\hat{\mathbf{A}}^L\Delta\mathbf{X}\,\mathbf{W}\bigr]_{t,c'} \;-\; \bigl[\hat{\mathbf{A}}^L\Delta\mathbf{X}\,\mathbf{W}\bigr]_{t,c^\ast}
\end{equation}
under a rank-one perturbation $\Delta\mathbf{X}=\alpha\,\mathbf{v}\mathbf{w}^\top$ with $\|\mathbf{v}\|_2=\|\mathbf{w}\|_2=1$, support constraint $\mathrm{supp}(\mathbf{v})\subseteq\mathcal{A}$, and Frobenius budget $\|\Delta\mathbf{X}\|_F^2\le\gamma$. The unique (up to sign) optimum factorizes as
\begin{equation}
\label{eq:targeted_heuristic}
    \mathbf{v}^\star \;\propto\; \hat{\mathbf{A}}^L[\mathcal{A},\,t], \qquad \mathbf{w}^\star \;\propto\; \mathbf{W}\bigl(\mathbf{e}_{c'}-\mathbf{e}_{c^\ast}\bigr), \qquad \alpha^\star = \sqrt{\gamma},
\end{equation}
where $\mathbf{W}=\mathbf{W}^{(0)}\mathbf{W}^{(1)}\cdots\mathbf{W}^{(L-1)}$ is the composed weight matrix of the linearized $L$-layer GCN, and $\hat{\mathbf{A}}^L[\mathcal{A},t]$ denotes the column of $\hat{\mathbf{A}}^L$ indexed by $t$, restricted to rows in $\mathcal{A}$ and zero-padded elsewhere.
\end{proposition}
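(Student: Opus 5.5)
Substituting the rank-one ansatz turns $\phi$ into a bilinear form in $(\mathbf{v},\mathbf{w})$, which I will then maximize by Cauchy--Schwarz, one factor at a time. Write $\mathbf{a}\triangleq(\hat{\mathbf{A}}^L)^\top\mathbf{e}_t=\hat{\mathbf{A}}^L\mathbf{e}_t$, using symmetry of $\hat{\mathbf{A}}$, and $\mathbf{g}\triangleq\mathbf{W}(\mathbf{e}_{c'}-\mathbf{e}_{c^\ast})\in\mathbb{R}^D$. For $\Delta\mathbf{X}=\alpha\,\mathbf{v}\mathbf{w}^\top$,
\begin{equation*}
\phi(\Delta\mathbf{X})=\mathbf{e}_t^\top\hat{\mathbf{A}}^L\Delta\mathbf{X}\,\mathbf{W}(\mathbf{e}_{c'}-\mathbf{e}_{c^\ast})=\alpha\,(\mathbf{a}^\top\mathbf{v})(\mathbf{w}^\top\mathbf{g}).
\end{equation*}
This shows the separation directly: the graph enters only through $\mathbf{a}^\top\mathbf{v}$ and the weights only through $\mathbf{w}^\top\mathbf{g}$.

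Next I handle the constraints. Since $\|\mathbf{v}\|_2=\|\mathbf{w}\|_2=1$, we have $\|\Delta\mathbf{X}\|_F=|\alpha|$, so the budget reads $|\alpha|\le\sqrt{\gamma}$. The support constraint means $\mathbf{a}^\top\mathbf{v}=\mathbf{a}_{\mathcal{A}}^\top\mathbf{v}$, where $\mathbf{a}_{\mathcal{A}}=\hat{\mathbf{A}}^L[\mathcal{A},t]$ is the zero-padded restriction. Cauchy--Schwarz on each factor then gives
\begin{equation*}
\phi\le\sqrt{\gamma}\,\|\mathbf{a}_{\mathcal{A}}\|_2\,\|\mathbf{g}\|_2 .
\end{equation*}
Equality holds iff $|\alpha|=\sqrt{\gamma}$, $\mathbf{v}=\pm\mathbf{a}_{\mathcal{A}}/\|\mathbf{a}_{\mathcal{A}}\|_2$ and $\mathbf{w}=\pm\mathbf{g}/\|\mathbf{g}\|_2$, with the product of the three signs positive. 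Taking $\alpha^\star=\sqrt{\gamma}>0$ as the normalization, the remaining sign ambiguity is the joint flip $(\mathbf{v},\mathbf{w})\mapsto(-\mathbf{v},-\mathbf{w})$. This flip leaves $\Delta\mathbf{X}$ unchanged, which is the ``unique up to sign'' statement. The candidate $\mathbf{v}^\star$ is supported in $\mathcal{A}$, so it is feasible.

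The step I expect to need care is the degenerate case, since uniqueness needs nondegeneracy hypotheses. If $\mathbf{a}_{\mathcal{A}}=\mathbf{0}$, i.e., no node of $\mathcal{A}$ reaches $t$ within $L$ hops, or if $\mathbf{g}=\mathbf{0}$, then $\phi\equiv0$ and every feasible point is optimal. I will state these as standing assumptions. Note that in direct mode $\mathcal{A}=\{t\}$, and $[\hat{\mathbf{A}}^L]_{tt}>0$ because of the self-loops, so $\mathbf{a}_{\mathcal{A}}\neq\mathbf{0}$ there. Under these assumptions, the equality case of Cauchy--Schwarz pins down each unit factor up to sign, and this yields uniqueness. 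The optimal value $\sqrt{\gamma}\,\|\hat{\mathbf{A}}^L[\mathcal{A},t]\|_2\,\|\mathbf{W}(\mathbf{e}_{c'}-\mathbf{e}_{c^\ast})\|_2$ then follows immediately.
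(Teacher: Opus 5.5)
Your proposal is correct and follows essentially the same route as the paper: substitute the rank-one ansatz to factor $\phi$ as $\alpha\,\langle \hat{\mathbf{A}}^L[\mathcal{A},t],\mathbf{v}\rangle\,\langle \mathbf{W}(\mathbf{e}_{c'}-\mathbf{e}_{c^\ast}),\mathbf{w}\rangle$, align each unit factor with its argument via Cauchy--Schwarz, and saturate the budget at $\alpha^\star=\sqrt{\gamma}$. Your explicit nondegeneracy hypotheses ($\hat{\mathbf{A}}^L[\mathcal{A},t]\neq\mathbf{0}$ and $\mathbf{W}(\mathbf{e}_{c'}-\mathbf{e}_{c^\ast})\neq\mathbf{0}$) and your sign bookkeeping tighten the argument, because the paper's uniqueness claim silently relies on both.
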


\begin{proof}
Substituting $\Delta\mathbf{X} = \alpha\,\mathbf{v}\mathbf{w}^\top$ into~\eqref{eq:app_targeted_obj} and using $[\hat{\mathbf{A}}^L\Delta\mathbf{X}\,\mathbf{W}]_{t,c} = \alpha\cdot(\hat{\mathbf{A}}^L\mathbf{v})_t \cdot (\mathbf{w}^\top\mathbf{W})_c = \alpha\cdot\langle \hat{\mathbf{A}}^L[\mathcal{A},t],\mathbf{v}\rangle\cdot\langle \mathbf{W}^\top\mathbf{w},\mathbf{e}_c\rangle$ (since $\mathbf{v}$ is supported on $\mathcal{A}$, only entries of $\hat{\mathbf{A}}^L[\cdot,t]$ in rows $\mathcal{A}$ contribute), the objective separates as
\[
    \phi(\Delta\mathbf{X}) \;=\; \alpha\cdot\langle\hat{\mathbf{A}}^L[\mathcal{A},t],\,\mathbf{v}\rangle\cdot\langle\mathbf{W}(\mathbf{e}_{c'}-\mathbf{e}_{c^\ast}),\,\mathbf{w}\rangle.
\]
Each inner product is independently maximized by aligning the unit vector with its argument: $\mathbf{v}^\star = \hat{\mathbf{A}}^L[\mathcal{A},t]/\|\hat{\mathbf{A}}^L[\mathcal{A},t]\|_2$ (with the support restriction automatically satisfied) and $\mathbf{w}^\star = \mathbf{W}(\mathbf{e}_{c'}-\mathbf{e}_{c^\ast})/\|\mathbf{W}(\mathbf{e}_{c'}-\mathbf{e}_{c^\ast})\|_2$. The Frobenius budget $\|\alpha\,\mathbf{v}\mathbf{w}^\top\|_F = |\alpha|$ binds at $\alpha^\star = \sqrt{\gamma}$. Uniqueness up to sign follows from the Cauchy--Schwarz equality conditions.
\end{proof}

\paragraph{Discussion.}
Proposition~\ref{prop:targeted} preserves the graph--weight separation of Theorem~\ref{thm:separation} but with both factors specialized to the targeted regime. The graph factor becomes the $L$-hop influence column $\hat{\mathbf{A}}^L[\mathcal{A},t]$ instead of the global degree-proportional allocation $\sqrt{\tilde{\mathbf{d}}}$, because in the targeted setting only nodes in $\mathcal{A}$ that propagate to $t$ within $L$ hops can affect $t$'s prediction --- a node outside $t$'s $L$-hop neighborhood has zero entry in this column. Likewise, the weight factor becomes the per-target logit-gap direction $\mathbf{W}(\mathbf{e}_{c'}-\mathbf{e}_{c^\ast})$ instead of the global Frobenius-optimal singular vector $\mathbf{w}_1$, because the targeted objective only cares about the difference between two specific output coordinates. For the two-layer GCN ($L=2$) used in our experiments, $\hat{\mathbf{A}}^L[\mathcal{A},t]$ is the standard two-hop influence column and $\mathbf{W}=\mathbf{W}^{(0)}\mathbf{W}^{(1)}$.

\paragraph{Greedy discretization for integer feature flips.}
The continuous rank-one optimum $\Delta\mathbf{X}^\star = \sqrt{\gamma}\,\mathbf{v}^\star(\mathbf{w}^\star)^\top$ is converted to an integer-flip attack by the same greedy top-$\gamma$ procedure described in Appendix~\ref{app:optimization}, with $\mathbf{v}^\star$ and $\mathbf{w}^\star$ playing the roles of $\mathbf{b}_\mathcal{A}/\|\mathbf{b}_\mathcal{A}\|$ and $\mathbf{w}_1$ respectively, and per-target budget $\gamma$ replacing the global budget $\Gamma_{\text{total}}$. This is the integer-flip attack deployed in Table~\ref{tab:targeted}.

\section{SepAT Derivation}
\label{app:defense_derivation}

This appendix derives the SepAT objective of Section~\ref{sec:robust_training}. Appendix~\ref{app:naive_penalty} shows why a na\"ive spectral-norm penalty on $\mathbf{W}$ fails. Appendix~\ref{app:bilateral} derives the bilateral output-stability regularizer. Appendix~\ref{app:curriculum} specifies the curriculum schedule.

\subsection{Why the Na\"ive Penalty Fails}
\label{app:naive_penalty}

A natural first attempt at hardening the GCN against SepAttack would be to penalize the spectral norm $\sigma_{\max}(\mathbf{W})$ of the composed weight matrix, since Theorem~\ref{thm:separation} shows the attack magnitude depends on $\lambda_1(\mathbf{W}\mathbf{W}^T) = \sigma_{\max}(\mathbf{W})^2$. This approach is ineffective.

\paragraph{Scale invariance of the classification task.}
Under the rescaling $\mathbf{W} \mapsto \alpha\mathbf{W}$ with $\alpha > 0$:
\begin{itemize}
    \item The adversarial residual scales as $\|\hat{\mathbf{A}}^L\Delta\mathbf{X}^*\alpha\mathbf{W}\|_2 = \alpha\,\|\hat{\mathbf{A}}^L\Delta\mathbf{X}^*\mathbf{W}\|_2$.
    \item The clean classification margin between class $c$ and class $c'$ at node $u$, defined as $\mathbf{z}_{u,c} - \mathbf{z}_{u,c'} = [\hat{\mathbf{A}}^L\mathbf{X}\alpha\mathbf{W}]_{u,c} - [\hat{\mathbf{A}}^L\mathbf{X}\alpha\mathbf{W}]_{u,c'}$, scales as $\alpha\,(\mathbf{z}_{u,c}-\mathbf{z}_{u,c'})$.
\end{itemize}
Consequently, both adversarial residual and clean margins scale identically in $\alpha$, and the \emph{relative} vulnerability measured by the ratio residual/margin is invariant. Minimizing $\sigma_{\max}(\mathbf{W})$ merely rescales the entire logit geometry without improving robustness: the optimizer drives $\alpha \to 0$ to minimize the penalty, degrading both adversarial and clean performance proportionally.

\paragraph{Consequence.}
Robustness cannot be obtained by shrinking $\mathbf{W}$. It must be obtained by ensuring that the model's \emph{output} is stable under the worst-case perturbation, independently of the overall scale of the weights. This motivates the bilateral output-stability regularizer of Appendix~\ref{app:bilateral}.

\subsection{Bilateral Output-Stability Regularization}
\label{app:bilateral}

Given the current weights $\theta$, let $\Delta\mathbf{X}^* = \delta\cdot \mathbf{v}_1\,\mathbf{w}_1^T$ denote the full-access SepAttack perturbation of Theorem~\ref{thm:separation}, with $\mathbf{v}_1 = \tilde{\mathbf{D}}^{1/2}\mathbf{1}_N/\|\tilde{\mathbf{D}}^{1/2}\mathbf{1}_N\|_2$ and $\mathbf{w}_1$ the top left-singular vector of $\mathbf{W}^{(0)}\mathbf{W}^{(1)}$ at the current $\theta$. We require the model output to remain close to its clean value under both signs of the perturbation:
\begin{equation}
\label{eq:app_bilateral_penalty}
    \mathcal{R}_{\text{spec}}(\theta) \;=\; \frac{1}{|\mathcal{V}_L|}\sum_{u \in \mathcal{V}_L}\sum_{s\in\{+,-\}}\bigl\|f_\theta(\mathbf{A},\,\mathbf{X}+s\Delta\mathbf{X}^*)_u - f_\theta(\mathbf{A},\mathbf{X})_u\bigr\|_2^2.
\end{equation}

\paragraph{Why bilateral?}
ReLU is asymmetric: $\mathrm{ReLU}(x+\epsilon)$ and $\mathrm{ReLU}(x-\epsilon)$ are not mirror images of $\mathrm{ReLU}(x)$ at a general pre-activation $x$. In particular, at a pre-activation in the active regime ($x>0$), a positive perturbation $+\epsilon$ propagates linearly while a negative perturbation $-\epsilon$ may shut off the unit entirely. Thus the attack $\Delta\mathbf{X}^*$ and its sign-flipped counterpart $-\Delta\mathbf{X}^*$ elicit qualitatively different responses from the nonlinear GCN, and training against only one sign hardens the model in only one direction. Regularizing against both signs closes this asymmetry.

\paragraph{Full objective.}
The SepAT loss is
\begin{equation}
\label{eq:app_full_loss}
    \mathcal{L}_{\text{robust}}(\theta) \;=\; \mathcal{L}_{\text{CE}}(f_\theta(\mathbf{A},\mathbf{X}),\mathbf{y}) \;+\; \lambda\|\mathbf{W}^{(0)}\|_F^2 \;+\; \alpha\,\mathcal{R}_{\text{spec}}(\theta),
\end{equation}
with $\lambda$ the standard weight-decay coefficient and $\alpha$ the spectral regularizer weight.

\paragraph{Recomputing $\Delta\mathbf{X}^*$ at each step.}
Since $\mathbf{w}_1$ depends on the current weights, the SepAttack perturbation $\Delta\mathbf{X}^*$ must be recomputed at each training step. The graph-dependent factor $\mathbf{v}_1 = \tilde{\mathbf{D}}^{1/2}\mathbf{1}_N/\|\cdot\|$ is a constant of the graph and is computed once at the start of training. The feature-space direction $\mathbf{w}_1$ is the top singular vector of the $D\times C$ matrix $\mathbf{W}^{(0)}\mathbf{W}^{(1)}$, computed by power iteration; in the training loop, warm-starting from the previous step's $\mathbf{w}_1$ makes a single power-iteration update per step sufficient, with per-step cost $O(DC)$.

\subsection{Curriculum Schedule}
\label{app:curriculum}

Introducing the spectral regularizer at full strength from the start of training destabilizes optimization: at initialization the model has no meaningful structure, and $\mathcal{R}_{\text{spec}}$ penalizes changes to an uninformative reference. To avoid this, we ramp the perturbation budget linearly from $0$ to $\delta_{\max}$ over $T_w$ warmup epochs:
\begin{equation}
\label{eq:app_curriculum}
    \delta_t \;=\; \min\!\left(\frac{t}{T_w},\,1\right)\cdot\delta_{\max}, \qquad t = 0, 1, \ldots
\end{equation}
We use $T_w = 50$, $\delta_{\max} = 10$, $\alpha = 0.25$, and $\lambda = 5\times 10^{-4}$ throughout. With this schedule, the first $T_w$ epochs function as standard training with a small spectral regularization, and spectral hardening engages gradually as the classifier acquires meaningful structure. In practice we found this curriculum essential for stable optimization, removing it produced erratic training dynamics and degraded both clean and robust accuracy.

\section{Empirical Validation of the Closed-Form Rank-One Optimum}
\label{app:rank_one_validation}

Theorem~\ref{thm:separation} derives the closed-form rank-one perturbation $\Delta\mathbf{X}^\star = \sqrt{\gamma}\,\mathbf{v}_1\mathbf{w}_1^\top$ under the linearized GCN. This appendix verifies that the optimum retains its predicted dominance once the full ReLU non-linearity is reinstated, along two complementary lines: a head-to-head comparison against alternative rank-one attacks (Figure~\ref{fig:spectral-validation}), and a symmetric factor-knockout ablation that probes the multiplicative structure of the optimum (Table~\ref{tab:ablation}).

\paragraph{Comparison among rank-one attacks.}
Figure~\ref{fig:spectral-validation} compares four rank-one strategies sharing the same Frobenius budget on the nonlinear ReLU-GCN: \textbf{SepAttack} (deployed: $\sqrt{\tilde{d}}$ aligned with $\mathbf{w}_1$), \textbf{Uniform-$\mathbf{w}_1$} (uniform spatial allocation, retains $\mathbf{w}_1$), \textbf{Random} (both factors random unit vectors), and \textbf{Unif-Unif} (both factors uniform; sanity floor). In the full-access setting (panel a), SepAttack reaches the highest misclassification rate on Cora, CiteSeer, and Cora-ML at every budget; on PubMed, SepAttack leads at small-to-moderate budgets, while Unif-Unif catches up and slightly overtakes at the largest budget --- so SepAttack is the strongest among the rank-one separable attacks tested in most, but not all, scenarios. The gap to Uniform-$\mathbf{w}_1$ quantifies the graph-factor contribution at $3$--$10$\,pp at $\delta=10$. In the restricted-access setting (panel b, $|\mathcal{A}|{\approx}100$), the heuristic of Proposition~\ref{prop:restricted} again dominates its feasible-set peers. Per-seed variance is small at every budget; the closed-form construction does not depend on initialization or stochastic search.

\begin{figure}[t]
    \centering
    \includegraphics[width=\textwidth]{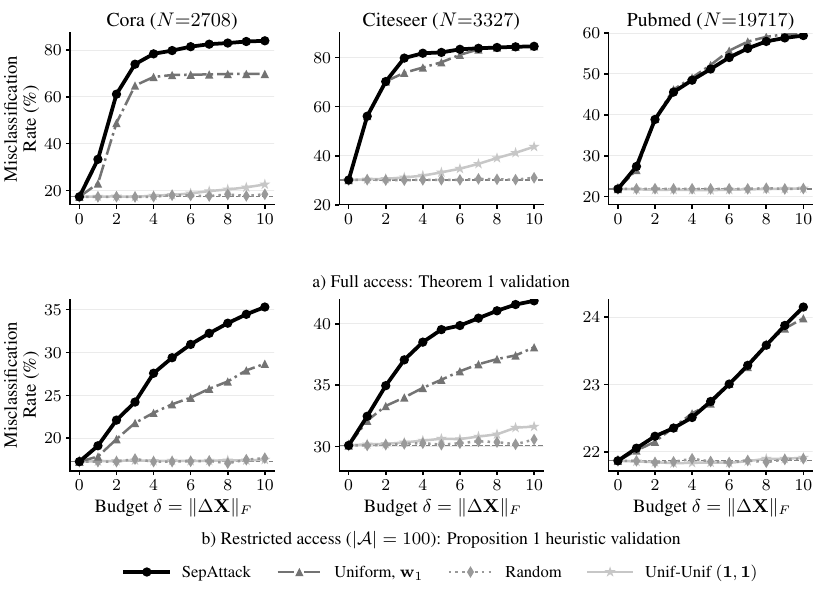}
    \caption{SepAttack on the nonlinear ReLU-GCN, 5 seeds. \textbf{(a)}~Full access: SepAttack (black) achieves the highest misclassification rate on Cora, CiteSeer, and Cora-ML at every budget; on PubMed, Unif-Unif overtakes at the largest budget. \textbf{(b)}~Restricted access ($|\mathcal{A}|{=}100$, attacker set drawn uniformly at random once per seed and matched across methods): the Proposition~\ref{prop:restricted} heuristic $\mathrm{SepAttack}|\mathcal{A}$ dominates feasible-set peers. Dashed grey: clean misclassification.}
    \label{fig:spectral-validation}
\end{figure}

\paragraph{Symmetric factor-knockout ablation.}
Theorem~\ref{thm:separation} predicts a \emph{multiplicative} rank-one
structure: the optimal perturbation is a product of a graph factor
$\sqrt{\tilde{\mathbf{d}}}/\|\sqrt{\tilde{\mathbf{d}}}\|$ and a weight
factor $\mathbf{w}_1$, and neither factor alone is predicted to be optimal --- the multiplicative form means that any attack lacking either the optimal graph factor or the optimal weight factor strictly underperforms the combined optimum. Figure~\ref{fig:spectral-validation} above tests one slice of this prediction (the optimal $\sqrt{\tilde d}$ allocation against uniform spatial allocation, both retaining $\mathbf{w}_1$); Table~\ref{tab:ablation} below tests it symmetrically, by independently randomizing each factor at the full-access budget $\delta\!=\!10$.

\begin{table}[t]
\centering
\caption{%
  Separation-principle ablation at $\delta=10$ (full access):
  accuracy (\%) under each attack component, averaged over 5~seeds.
  \textbf{Random}: $u_\text{rand}\cdot w_\text{rand}^T$, both unit vectors drawn iid from
  $\mathcal{N}(0,I)$ and renormalized (floor baseline).
  \textbf{Graph-only}: $u_\text{best}\cdot w_\text{rand}^T$ (optimal spatial allocation, random
  weight direction).
  \textbf{Weight-only}: $u_\text{rand}\cdot\mathbf{w}_1^T$ (random spatial allocation, optimal
  weight direction).
  \textbf{Combined}: $u_\text{best}\cdot\mathbf{w}_1^T$ (Theorem~\ref{thm:separation}
  optimum with best-sign selection of $\pm\sqrt{\tilde{d}}/\|\sqrt{\tilde{d}}\|$).
  All variants share the same Frobenius budget $\|\Delta\mathbf{X}\|_F=\delta$.
  Random vectors are seeded per-seed (\texttt{seed+1000} for
  $w_\text{rand}$, \texttt{seed+2000} for $u_\text{rand}$).%
}
\label{tab:ablation}
\renewcommand{\arraystretch}{1.2}
\footnotesize
\begin{tabular}{@{}l c c c c c c@{}}
\toprule
\textbf{Dataset} & \textbf{Features}
  & \textbf{Clean}
  & \textbf{Random}
  & \textbf{Graph-only}
  & \textbf{Weight-only}
  & \textbf{Combined}
  \\
\midrule
  Cora     & binary     & 82.6 & 82.6 & 82.6 & 81.1 & \best{65.4} \\
  CiteSeer & binary     & 74.6 & 74.6 & 74.6 & 72.2 & \best{62.1} \\
  Cora-ML  & continuous & 85.5 & 85.4 & 85.4 & 74.3 & \best{28.1} \\
  PubMed   & continuous & 86.1 & 86.1 & 86.0 & 75.0 & \best{42.4} \\
\bottomrule
\end{tabular}
\end{table}

On binary-feature graphs (Cora, CiteSeer), randomizing \emph{either} factor collapses the attack to the random-baseline floor: Graph-only and Weight-only both land within 2.5\,pp of clean accuracy, while Combined drives accuracy down by $17.2$\,pp (Cora) and $12.5$\,pp (CiteSeer). This is the multiplicative signature predicted by the rank-one decomposition $\Delta\mathbf{X}^* = \sqrt\gamma\cdot\mathbf{v}_1\mathbf{w}_1^T$: the attack's strength is the \emph{product} of two spectral contributions, and disabling either factor sacrifices almost all of the damage.

On continuous-feature graphs (Cora-ML, PubMed), the two factors are not symmetric. Graph-only stays at the random floor ($\leq\!0.2$\,pp from clean), but Weight-only already produces a meaningful attack ($11.2$\,pp drop on Cora-ML and $11.1$\,pp on PubMed), showing that the weight direction $\mathbf{w}_1$ carries substantial adversarial signal even when paired with a spatially incoherent allocation. Adding the optimal degree-weighted allocation amplifies this further, to $57.4$\,pp (Cora-ML) and $43.7$\,pp (PubMed): roughly $5\times$ and $4\times$ multiplicative gains from aligning the spatial factor with $\sqrt{\tilde{\mathbf{d}}}$. This asymmetry between graph and weight factors refines the interpretation of the transfer results in Section~\ref{sec:transfer}: near-identical transfer on Cora-ML and PubMed is driven primarily by the portability of $\mathbf{w}_1$ across architectures (which accounts for the first ${\sim}11$\,pp of damage by itself), with the architecture-invariant spatial factor providing the additional multiplicative amplification when an architecture shares $\mathbf{w}_1$ geometry.

\section{Binary-Feature Cross-Architecture Transfer}
\label{app:binary_transfer}

Table~\ref{tab:binary_transfer} reports cross-architecture transfer on the binary-feature citation networks (Cora, CiteSeer), the negative complement to the continuous-feature transfer of Figure~\ref{fig:transfer}. All three attacks lose substantial efficiency relative to the source GCN. SepAttack degrades the most, consistent with the separation-principle prediction: on binary features the weight factor $\mathbf{w}_1$ does not share across architectures, so the rank-one attack is tied to the source GCN's weights in a way that gradient-based attacks (PGD-40, FGM)---which re-optimize at the single-step level---partially escape.

\begin{table}[t]
\centering
\caption{Cross-architecture transfer on binary-feature citation networks at $\delta=10$, averaged over 5~seeds. Cells report transfer ratio $\pm$~std (accuracy drop on target / drop on source GCN, \%). Last column: accuracy drop on the source GCN itself (pp), the denominator used for the ratios.}
\label{tab:binary_transfer}
\renewcommand{\arraystretch}{1.15}
\footnotesize
\begin{tabular}{@{}ll cccc c@{}}
\toprule
 & & \multicolumn{4}{c}{\textbf{Transfer ratio (\%) to target architecture}} & \textbf{Source} \\
\cmidrule(lr){3-6}
\textbf{Dataset} & \textbf{Attack} & \textbf{GAT} & \textbf{SAGE} & \textbf{APPNP} & \textbf{JKNet} & \textbf{drop (pp)} \\
\midrule
\multirow{3}{*}{Cora} & SepAttack (our attack) & 50\% $\pm$ 9 & 27\% $\pm$ 13 & 52\% $\pm$ 15 & 73\% $\pm$ 38 & 17.2 $\pm$ 2.8 \\
 & PGD-40 & 79\% $\pm$ 5 & 75\% $\pm$ 9 & 76\% $\pm$ 7 & 92\% $\pm$ 4 & 23.0 $\pm$ 2.0 \\
 & FGM    & 77\% $\pm$ 6 & 70\% $\pm$ 9 & 70\% $\pm$ 6 & 91\% $\pm$ 2 & 16.0 $\pm$ 1.4 \\
\midrule
\multirow{3}{*}{CiteSeer} & SepAttack & 22\% $\pm$ 12 & 31\% $\pm$ 12 & 20\% $\pm$ 10 & 43\% $\pm$ 20 & 12.5 $\pm$ 2.5 \\
 & PGD-40 & 66\% $\pm$ 6 & 77\% $\pm$ 6 & 64\% $\pm$ 3 & 88\% $\pm$ 10 & 24.1 $\pm$ 1.7 \\
 & FGM    & 63\% $\pm$ 6 & 75\% $\pm$ 7 & 57\% $\pm$ 1 & 86\% $\pm$ 10 & 18.1 $\pm$ 1.9 \\
\bottomrule
\end{tabular}
\end{table}

\end{document}